\documentclass{article} 
\usepackage{iclr2027_conference,times}

\usepackage{amsmath,amsfonts,bm}

\def\eqref#1{equation~\ref{#1}}

\def\1{\bm{1}}

\DeclareMathAlphabet{\mathsfit}{\encodingdefault}{\sfdefault}{m}{sl}
\SetMathAlphabet{\mathsfit}{bold}{\encodingdefault}{\sfdefault}{bx}{n}

\usepackage{hyperref}
\usepackage{url}
\usepackage{amsmath,amssymb,amsthm}
\usepackage{booktabs}
\usepackage{multirow}
\usepackage{graphicx}
\usepackage{algorithm}
\usepackage{algorithmic}
\usepackage{xcolor}
\usepackage{cleveref}

\newtheorem{proposition}{Proposition}
\newcommand{\method}{LatticeSMC}
\newcommand{\task}{budget-matched chunked steering}

\title{LatticeSMC: Where to Spend Inference-Time\\Compute in Chunked Sequence Generators}

\author{Xuanchen Wang \\
School of Computer Science\\
The University of Sydney\\
Sydney, NSW, Australia  \\
\texttt{xwan0579@uni.sydney.edu.au} \\
\And
Heng Wang \\
School of Computer Science \\
The University of Sydney \\
Sydney, NSW, Australia \\
\texttt{heng.wang@sydney.edu.au} \\
\AND
Weidong Cai \\
School of Computer Science \\
The University of Sydney\\
Sydney, NSW, Australia \\
\texttt{tom.cai@sydney.edu.au}
}

\iclrfinalcopy 
\begin{document}

\maketitle
\lhead{}  
\renewcommand{\headrulewidth}{0pt}

\begin{abstract}
Long-form generators for music, motion and video produce a sequence chunk by chunk, each chunk by iterative denoising, while the rewards that matter are defined on the whole sequence. Existing inference-time steering methods act on one axis of this process at a time, best-of-N at the end, Feynman-Kac steering across denoising steps, streaming pipelines across chunks, and have been compared under different return rules at unmatched compute. We introduce \task{}, the task of maximizing a sequence-level reward with a chunked generator under a fixed number of denoiser evaluations, and propose \method{}, a sampler derived from a Feynman-Kac model on the two-dimensional lattice of chunk index and denoising step. Two exact telescoping results make the design derivable: for chunk-additive rewards the two axes carry identical weights, so resampling belongs wherever lookahead is cheapest; for terminal rewards any prefix score yields an exact intermediate potential, so a prefix-evaluable reward is a twist with no estimation and no extra denoiser evaluations. \method{} resamples properly on these exact potentials, at chunk boundaries and, when the plug-in reward is free, within chunks, and returns either a draw from its particle approximation of the reward-tilted target or its best particle, and recovers Feynman-Kac steering as its within-chunk schedule. Under matched compute, on a music-to-dance diffusion model and a 40-second text-to-music model, it raises beat alignment from 0.234 to 0.441 (best-of-N: 0.354) and prompt adherence from 0.470 to 0.560 at 32 particles with held-out quality metrics at base, holds its lead on the long-range reward at six and eight chunks, and is preferred by human raters over every comparator in 60 to 77 percent of pairwise judgements. Two measurable properties of the reward settle what remains: how hard to commit at a boundary follows the information in its exact potential, which the effective-sample-size gate reads off and greedy pruning cannot, and whether content-axis lookahead pays follows the within-set predictability of future reward, measured before any particle is spent. The source code is available at \href{https://github.com/wangxuanchen88/LatticeSMC}{\texttt{LatticeSMC}}.
\end{abstract}

\section{Introduction}
\label{sec:intro}

Generative models for long-form sequential data increasingly produce their output in chunks: dance diffusion conditions each five-second window on the last \citep{tseng2023edge}, latent text-to-music models generate a window that inpainting extends past its edge \citep{evans2024longform,evans2024stableaudioopen}, and streaming video models denoise frame blocks against a growing context \citep{chen2024diffusionforcing,yin2025causvid,huang2025selfforcing}. Each chunk is generated by iterative denoising \citep{ho2020ddpm,lipman2023flow}, so a sequence is the result of a process with two clocks, the denoising step within a chunk and the chunk index, and the rewards that matter are rarely local to a chunk: a choreography must follow the beat for its whole duration and a song must match its prompt over forty seconds rather than its first ten. Spending inference-time compute to raise such rewards without retraining the generator is our setting.

Each existing steering tool acts on a single axis of this process. Best-of-N draws complete sequences and keeps the best \citep{brown2024monkeys,ma2025inference}, spending every evaluation before any evidence arrives. Feynman-Kac steering \citep{singhal2025fk,skreta2025fkc} resamples across denoising steps on plug-in rewards of partially denoised states, and twisted sequential Monte Carlo replaces the plug-in by a learned or refined twist \citep{wu2023tds,wang2026trustregion,cdm2026}; for a chunked generator either means resampling on estimates of the current chunk while the sequence-level reward is out of reach. Streaming video systems prune candidate chunks after each block by greedy top-$M$ selection \citep{streamt12026}. These methods have been compared under different return rules and at budgets that do not charge the decoder and reward evaluations lookahead requires, so where on the two-clock lattice to place resampling events has had no principled answer.

Our starting point is that every one of these methods is a choice of potentials and resampling events on one object, a Feynman-Kac model \citep{delmoral2004,chopin2020} on the lattice of chunk index $k$ and denoising step $\tau$ (Figure~\ref{fig:framework}): best-of-N at the terminal corner, Feynman-Kac steering along the denoising axis, chunk pruning along the chunk axis with degenerate resampling. Two properties follow by telescoping: for a reward that decomposes over chunks, the product of a chunk's denoising-axis potentials equals its chunk-axis potential exactly, so the two axes define the same tilted target with the same per-chunk weights and differ only in where resampling occurs; for a terminal reward, any prefix scores whose final value is the reward define exact intermediate potentials, so a prefix-evaluable reward is a twist with no estimation. \method{} is the sampler these results prescribe: exact chunk rewards or prefix-score increments as potentials, so its target is the reward-tilted distribution without a learned component; resampling gated by effective sample size \citep{kong1994ess} at chunk boundaries; a within-chunk schedule switched on only when plug-in evaluation is free, since axis equivalence says those events change the resampling times and not the weights; and two return modes, a weighted draw for sampling and the best particle for search.

We evaluate \method{} under matched denoiser evaluations on four sequence-level rewards across two modalities: beat alignment and a long-range repetition reward for a music-to-dance diffusion model on AIST++ \citep{li2021aistpp}, and prompt adherence \citep{wu2023clap} and motif recurrence for a 40-second text-to-music flow model. 

Our contributions can be summarized as follows:
\begin{itemize}
\item {}We introduce the lattice Feynman-Kac model, a unifying formulation under which existing inference-time methods correspond to different placements of potentials. We further derive two telescoping results: identical axis weights for chunk-additive rewards and exact prefix-score potentials for terminal rewards.

\item {}We propose \method{}, a training-free sampler whose every design decision follows directly from the model, with no learned components.

\item {}We derive practical criteria for commitment and lookahead. Commitment at each boundary should follow the information carried by its exact potential, which explains when the sharpest per-boundary tilt succeeds and when an ESS gate is required to preserve the particle set. Lookahead pays in proportion to the within-set predictability of future reward, a quantity measurable before sampling, which prefix-evaluable scores realize and learned regressors do not.

\item {}With 32 particles, \method{} matches or outperforms every compared method on every reward, raising beat alignment from 0.234 to 0.441 (best-of-N: 0.354; chunk pruning: 0.415) and prompt adherence from 0.470 to 0.560 while preserving held-out quality. It matches Feynman-Kac steering at a quarter of the decoder calls, retains its advantage on long-range rewards at six and eight chunks, holds its ranking from 4M to 38M parameters, and is preferred by human raters.
\end{itemize}

\begin{figure}[t]
\centering
\includegraphics[width=\textwidth]{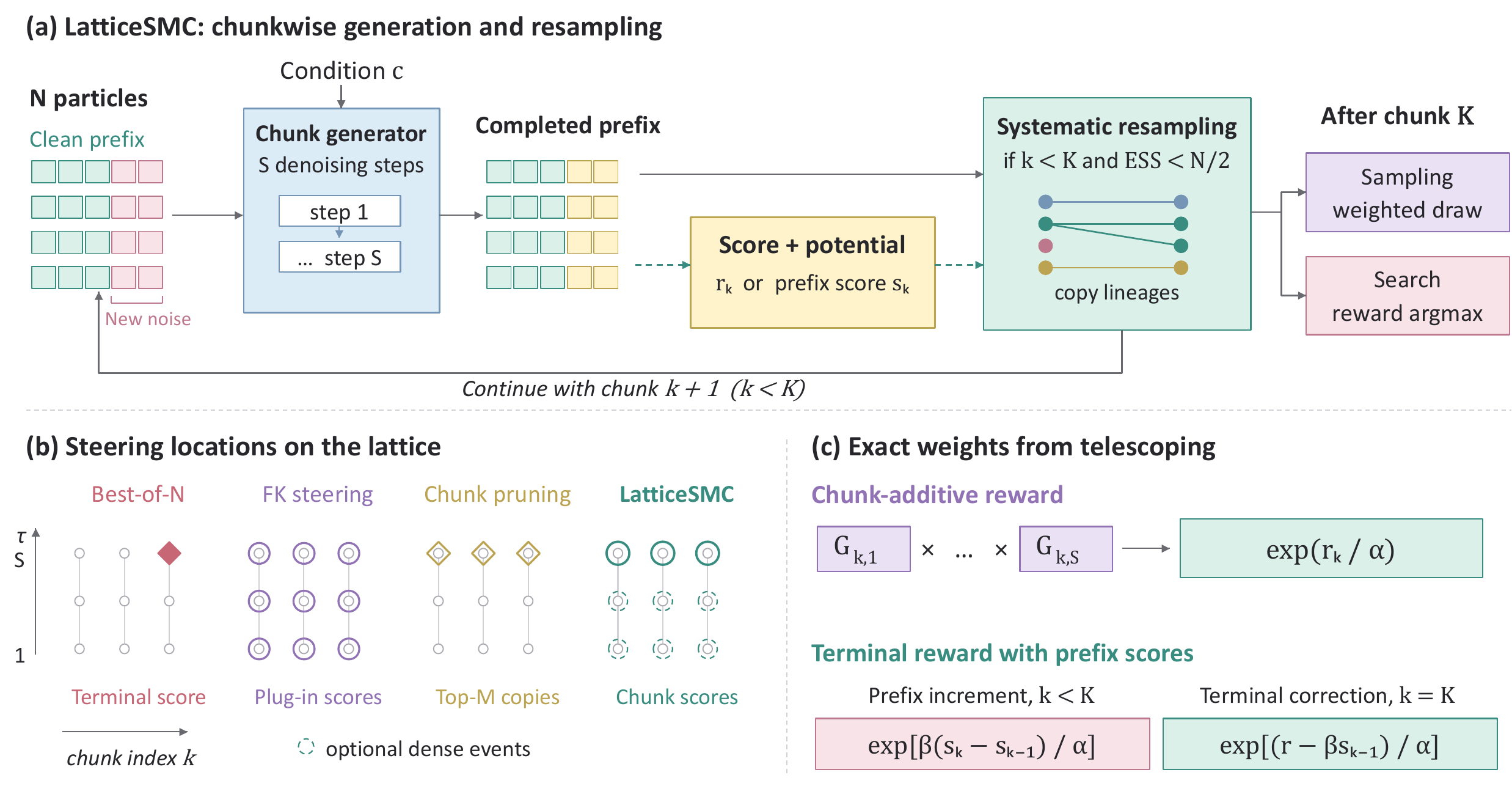}
\caption{\textbf{The lattice Feynman-Kac model and \method{}.} (a) One pass: $N$ particles extend their clean prefix by one chunk, each completed prefix is scored (chunk reward $r_k$ or prefix score $s_k$), the set is resampled systematically at the boundary when the ESS falls below $N/2$, and after the last chunk the sampler returns a weighted draw (sampling) or the best particle (search). (b) Where each method places its potentials on the lattice of chunk index $k$ and denoising step $\tau$. (c) The telescoping results: denoising-axis potentials multiply to the chunk potential (Proposition~\ref{prop:axis}); prefix-score increments with a terminal correction multiply to $\exp(r/\alpha)$ (Proposition~\ref{prop:prefix}).}
\label{fig:framework}
\vspace{-6pt}
\end{figure}

\section{Related Work}
\label{app:related}

\textbf{Inference-Time Compute and Search.} Inference-time compute improves language-model outputs through repeated sampling with verifiers \citep{brown2024monkeys}, process-level verification \citep{lightman2024verify}, tree search over partial solutions \citep{yao2023tot}, and adaptive allocation between sampling and revision \citep{snell2024scaling}; controlled decoding similarly scores partial sequences with learned values \citep{mudgal2024controlled}. For diffusion models, analogous methods search over noises or trajectories \citep{ma2025inference} or perform reward-guided test-time alignment \citep{kim2025das}. These approaches operate on a single generation clock. We instead treat chunk index and denoising step as two clocks of one process and study where inference-time compute should be allocated across this lattice.

\textbf{Reward-Guided Diffusion Sampling.} Existing approaches broadly modify the sampling kernel, fine-tune the generator, or reweight a sample population. Kernel-based guidance includes classifier guidance \citep{dhariwal2021diffusion}, classifier-free guidance \citep{ho2022cfg}, diffusion posterior sampling \citep{chung2023dps}, and loss-guided sampling \citep{song2023lgd}; see \citet{uehara2025review}. Fine-tuning methods optimize diffusion models with reinforcement learning \citep{black2024ddpo,fan2023dpok}, reward gradients \citep{clark2024draft}, or preference optimization \citep{wallace2024dpo}, but require reward-specific training and can over-optimize learned rewards \citep{gao2023overoptimization}. Population-based methods instead retain the kernel and use SMC during denoising \citep{trippe2023smcdiff,wu2023tds,dou2024fps,cardoso2024mcgdiff}, Feynman--Kac steering or correction \citep{singhal2025fk,skreta2025fkc}, and refined or amortized twists \citep{wang2026trustregion,cdm2026}. \method{} belongs to this family but introduces the chunk axis, showing when chunk- and denoising-axis formulations coincide, when prefix scores define exact twists, and where resampling should be placed at matched cost. Kernel modification is complementary to this placement problem.

\textbf{Sequential Monte Carlo.} \method{} builds on bootstrap filtering \citep{gordon1993bootstrap}, ESS-based adaptive resampling \citep{kong1994ess,liu1998smc}, and low-variance resampling \citep{douc2005resampling}. Auxiliary particle filters and SMC samplers generalize these constructions \citep{pitt1999apf,delmoral2006smcsamplers}, while twisting toward expected future weight is classical \citep{whiteley2014twisted,guarniero2017iapf,heng2020controlled}. The broader Feynman--Kac view is developed in \citet{delmoral2004,chopin2020,naesseth2019}. SMC has also been used for language generation with programmable potentials \citep{lew2023smc}, learned twists \citep{zhao2024twisted}, structured constraints \citep{loula2025smc}, and rollout-based values \citep{puri2025rollout}. Our analysis connects the usefulness of twisting to the within-set predictability of future reward and therefore applies beyond the specific generators studied here.

\textbf{Chunked and Streaming Generators.} Long-form generation often proceeds block by block. In video, diffusion forcing \citep{chen2024diffusionforcing}, rolling diffusion \citep{ruhe2024rolling}, and causal autoregressive distillation \citep{yin2025causvid,huang2025selfforcing} generate or refine successive windows; streaming test-time scaling further prunes candidate blocks by reward \citep{streamt12026}. Music models similarly extend sequences autoregressively \citep{copet2023musicgen} or through latent-diffusion inpainting \citep{evans2024longform,evans2024stableaudioopen}. In dance, generation has progressed from token-based models \citep{siyao2022bailando} to diffusion approaches \citep{tseng2023edge,alexanderson2023lda}, with long-form choreography assembled from chunks or primitives \citep{li2023finedance,li2024lodge}. These systems focus on how to generate long sequences; our question is instead where inference-time compute should be spent across the chunk and denoising axes.

\section{The Lattice Feynman-Kac Model}
\label{sec:method}

\subsection{Budget-matched chunked steering}
\label{sec:task}

A chunked generator produces a sequence $x = (x^{(1)}, \dots, x^{(K)})$ of $K$ chunks under a condition $c$. Chunk $k$ is generated by $S$ denoising steps from noise to a clean chunk, each conditioned on the clean prefix $x^{(<k)}$ and $c$, so the base model factorizes as $p(x \mid c) = \prod_{k} p(x^{(k)} \mid x^{(<k)}, c)$, each factor the marginal of $S$ transitions $M_{k,\tau}(x^{(k)}_{\tau} \mid x^{(k)}_{\tau-1}, x^{(<k)}, c)$ of the sampler's kernel from noise $x^{(k)}_{0}$ to the clean chunk $x^{(k)} = x^{(k)}_{S}$, where $x^{(k)}_{\tau}$ is the state of chunk $k$ after denoising step $\tau$. A reward $r$ on complete sequences is \emph{chunk-additive} if it is supplied as a sum $r(x) = \sum_{k} r_k(x^{(\le k)})$ of chunk terms computable from the prefix through chunk $k$ as the sequence is generated, and \emph{terminal} if only its value on the complete sequence is supplied, so that intermediate information must come from auxiliary prefix scores. For a temperature $\alpha > 0$ the reward-tilted target is $\pi_{\alpha}(x \mid c) \propto p(x \mid c)\, \exp(r(x)/\alpha)$. \emph{Budget-matched chunked steering} is the task of returning one sequence given the base model, the reward and a budget $B$ of denoiser evaluations (one per particle per denoising step), under one of two objectives, both reported: \emph{sampling}, where the returned sequence should be distributed as $\pi_{\alpha}$, and \emph{search}, where it should maximize $r$.

\subsection{The lattice}
\label{sec:lattice}

Generation visits the lattice $\mathcal{L} = \{(k,\tau) : 1 \le k \le K,\ 1 \le \tau \le S\}$ in order, and every inference-time method we consider is a placement of potentials on it (Figure~\ref{fig:framework}(b)). A Feynman-Kac model on $\mathcal{L}$ \citep{delmoral2004,chopin2020} attaches to every lattice point a nonnegative potential $G_{k,\tau}$, a function of the state $s_{k,\tau} = (x^{(<k)}, x^{(k)}_{\tau})$ and its lineage, and defines the unnormalized path measure $\gamma(x_{\mathcal{L}}) = p(x_{\mathcal{L}} \mid c)\, \prod_{(k,\tau)} G_{k,\tau}(s_{k,\tau})$ over trajectories of intermediate states. The model is \emph{valid} for the target if the product of its potentials along any completed trajectory equals $\exp(r(x)/\alpha)$ up to a constant, so that the marginal of $\gamma$ on the clean sequence is $\pi_{\alpha}$. Sequential Monte Carlo approximates $\gamma$ with $N$ particles that follow the base kernels, multiply their weights by $G_{k,\tau}$ at each lattice point, and are resampled when $\mathrm{ESS} = (\sum_i w_i)^2 / \sum_i w_i^2$ falls below a threshold \citep{kong1994ess,doucet2011tutorial}; it converges to $\pi_{\alpha}$ as $N \to \infty$ \citep{delmoral2004}.

Every existing method is a placement on this lattice (Table~\ref{tab:placements}, Appendix~\ref{app:placements}). Best-of-N places one potential $\exp(r(x)/\alpha)$ at the terminal corner $(K,S)$, importance resampling there being its sampling form and the argmax its search form. Feynman-Kac steering \citep{singhal2025fk} places potentials along the denoising axis, $G_{k,\tau} = \exp\big((\hat r_{k,\tau} - \hat r_{\mathrm{prev}})/\alpha\big)$, with $\hat r_{k,\tau}$ the reward of the plug-in clean prediction $\hat x^{(k)}_{0}(x^{(k)}_{\tau})$ appended to the prefix and $\hat r_{\mathrm{prev}}$ the previous value along the lineage. Chunk pruning in the manner of \citet{streamt12026} places potentials at $\tau = S$ and replaces resampling by replication of the top $M$ particles; at $M = 1$ it is the $\alpha \to 0$ limit of ESS-gated resampling with argmax return (unique maximum, $N \ge 3$). \method{} places proper potentials on the chunk axis, as the two results in Figure~\ref{fig:framework}(c) prescribe.

\subsection{Two telescoping results}
\label{sec:props}

\begin{proposition}[Axis equivalence for chunk-additive rewards]
\label{prop:axis}
Let $r(x) = \sum_{k=1}^{K} r_k(x^{(\le k)})$, let $\hat r_{k,\tau}$ be the accumulated reward through chunk $k-1$ plus the chunk term evaluated on the plug-in prediction at step $\tau$, so that $\hat r_{k,S} = \sum_{j<k} r_j + r_k(x^{(\le k)})$ because the plug-in at the final step is the chunk itself, and let the denoising-axis potentials of chunk $k$ be the difference potentials of Feynman-Kac steering at any subset of steps $\mathcal{T}_k \subseteq \{1,\dots,S\}$ that contains $S$. Then for every lineage:
\begin{equation}
\textstyle\prod_{\tau \in \mathcal{T}_k} G_{k,\tau} \;=\; \exp\!\big(r_k(x^{(\le k)})/\alpha\big),
\label{eq:telescope}
\end{equation}
the chunk-axis potential of chunk $k$. Consequently the denoising-axis and chunk-axis Feynman-Kac models have the same path measure and assign the same per-chunk weight to every lineage; their sequential Monte Carlo approximations differ only in the lattice points at which resampling may occur.
\end{proposition}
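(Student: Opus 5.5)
The plan is a direct telescoping computation, followed by a short argument about path measures. First I fix the convention for the ``previous value along the lineage'' inside chunk $k$. Order $\mathcal{T}_k = \{\tau_1 < \tau_2 < \dots < \tau_m = S\}$ and set $\hat r_{\mathrm{prev}}$ at $\tau_1$ to the last value carried by the lineage. Since the previous chunk's schedule also contains $S$, that value is $\hat r_{k-1,S} = \sum_{j<k} r_j(x^{(\le j)})$. For $k=1$ it is the empty sum $0$.

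Then $\hat r_{\mathrm{prev}}$ at $\tau_i$ equals $\hat r_{k,\tau_{i-1}}$ for $i \ge 2$. The product of the difference potentials becomes the exponential of a telescoping sum:
\begin{equation*}
\textstyle\prod_{i=1}^{m} G_{k,\tau_i} = \exp\!\Big(\tfrac{1}{\alpha}\sum_{i=1}^{m}\big(\hat r_{k,\tau_i} - \hat r_{k,\tau_{i-1}}\big)\Big) = \exp\!\big((\hat r_{k,S} - \hat r_{k-1,S})/\alpha\big),
\end{equation*}
with $\hat r_{k,\tau_0} := \hat r_{k-1,S}$. Every intermediate plug-in value cancels, whatever its quality. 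I then use the hypothesis that the plug-in at $\tau = S$ is the clean chunk itself, so $\hat r_{k,S} = \sum_{j<k} r_j + r_k(x^{(\le k)})$. The difference is exactly $r_k(x^{(\le k)})$, which gives \eqref{eq:telescope}.

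For the consequence, I take the product of \eqref{eq:telescope} over $k$. On every completed lineage this gives $\exp(r(x)/\alpha)$ for the denoising-axis model. The chunk-axis model places the potential $\exp(r_k/\alpha)$ at $(k,S)$ and $1$ elsewhere, so its product is the same. Both models share the base path law $p(x_{\mathcal{L}} \mid c)$, so $\gamma$ is identical as a function of the full trajectory, and both are valid for $\pi_\alpha$. Moreover, the cumulative weight of any lineage at every chunk boundary $(k,S)$ agrees between the two models. The only difference is that the denoising-axis weights are redistributed across the intermediate points in $\mathcal{T}_k$. Since SMC resampling can only be triggered where a potential is applied, the two samplers coincide except at which lattice points resampling may happen.

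The main obstacle is bookkeeping rather than analysis. I must make precise that $\hat r_{\mathrm{prev}}$ is defined per lineage, and that resampling inside chunk $k$ does not break the telescoping. After resampling, a particle inherits its ancestor's last $\hat r$ value together with its state, so along any surviving ancestral path the chain of differences still telescopes. I also have to state explicitly that $S \in \mathcal{T}_k$ for every $k$ is what makes the chunk boundaries consistent. Without it, the residual $\hat r_{k,\tau_m}$ with $\tau_m < S$ would leak into chunk $k+1$, and the per-chunk identity would fail even if the final product still telescoped, provided the last chunk's schedule ends at $S$.
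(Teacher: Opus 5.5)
Your proposal is correct and follows the paper's proof: you telescope the log difference potentials over $\mathcal{T}_k$, use the fact that the plug-in at $\tau = S$ is the sampled chunk itself, and note that resampling copies whole lineages together with their accumulated $\hat r$, which is exactly the paper's argument. The only difference is cosmetic. You derive the baseline of the first potential as $\hat r_{k-1,S}$ from $S \in \mathcal{T}_{k-1}$, whereas the paper simply defines it as the accumulated prefix reward $r_{\mathrm{prefix}} = \sum_{j<k} r_j$.
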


The proof (Appendix~\ref{app:proofs}) is a telescoping sum of log-potentials, the identity drawn in Figure~\ref{fig:framework}(c); resampling permutes whole lineages and leaves it unchanged. Since the weights are identical, the choice of axis for a chunk-additive reward is a choice of where to spend resampling, to be made on cost: within-chunk events are free when the plug-in reward is, and the boundary offers the same weight at the lowest cost otherwise.

\begin{proposition}[Prefix-score potentials for terminal rewards]
\label{prop:prefix}
Let $r$ be any reward on complete sequences and let $s_0, s_1, \dots, s_K$ be any functions such that $s_k$ depends only on the prefix $x^{(\le k)}$, $s_0$ is a constant, and $s_K = r$. For any $\beta \in [0,1]$ define the chunk-axis potentials:
\begin{equation}
G_k \;=\; \exp\!\Big(\tfrac{\beta}{\alpha}\,(s_k - s_{k-1})\Big) \ \ \text{for } 1 \le k < K,
\qquad
G_K \;=\; \exp\!\Big(\tfrac{1}{\alpha}\,(r - \beta\, s_{K-1})\Big).
\label{eq:prefix}
\end{equation}
Then $\prod_{k=1}^{K} G_k = \exp\big((r - \beta s_0)/\alpha\big) \propto \exp(r/\alpha)$, so the Feynman-Kac model is valid for $\pi_{\alpha}$ for every choice of prefix scores and every $\beta$; $\beta = 0$ is best-of-N importance sampling and $\beta = 1$ uses the full prefix-score increments.
\end{proposition}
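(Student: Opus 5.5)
The plan is to take logarithms and telescope, exactly as for Proposition~\ref{prop:axis}. Along any completed trajectory, $\alpha\log\prod_{k=1}^{K}G_k$ equals
\[
\beta\sum_{k=1}^{K-1}(s_k-s_{k-1}) \;+\; r-\beta s_{K-1}.
\]
The sum telescopes to $\beta(s_{K-1}-s_0)$. Adding the terminal term cancels the $\beta s_{K-1}$ contributions and leaves $r-\beta s_0$. The edge case $K=1$ is consistent, since the sum is empty and $G_1=\exp((r-\beta s_0)/\alpha)$ directly. The hypothesis $s_K=r$ is not needed for the algebra itself. It only matters for interpreting $G_K$ as the increment $\exp((s_K-\beta s_{K-1})/\alpha)$. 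Since $s_0$ is a constant, $\exp(-\beta s_0/\alpha)$ does not depend on $x$, which gives the stated proportionality to $\exp(r/\alpha)$.

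Second, I would check that each $G_k$ is an admissible potential in the sense of Section~\ref{sec:lattice}. Each $G_k$ is nonnegative, being an exponential. It is a function of the state at lattice point $(k,S)$, because $s_k$ and $s_{k-1}$ depend only on $x^{(\le k)}$, and $G_K$ depends on the complete sequence available at $(K,S)$. I would place $G_k$ at $(k,S)$ and set every other lattice potential to $1$.

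Validity then follows from the definition. The product of potentials along every completed trajectory equals $\exp(r(x)/\alpha)$ up to the constant $\exp(-\beta s_0/\alpha)$. The intermediate denoising states therefore integrate out under $p(x_{\mathcal{L}}\mid c)$, and the marginal of $\gamma$ on the clean sequence is $p(x\mid c)\exp(r(x)/\alpha)/Z$, which is $\pi_\alpha$. As in Proposition~\ref{prop:axis}, resampling only permutes whole lineages and leaves this product identity intact, so the SMC approximation targets $\pi_\alpha$ for every choice of $\beta\in[0,1]$ and every choice of prefix scores.

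Finally, the two special cases. At $\beta=0$, every intermediate potential equals $1$ and $G_K=\exp(r/\alpha)$, so the only potential sits at the terminal corner. This is the best-of-N placement, in its importance-resampling form. At $\beta=1$, each $G_k$ is the full increment $\exp((s_k-s_{k-1})/\alpha)$, including $k=K$ because $s_K=r$.

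I expect no real obstacle, since the argument is a one-line telescoping. The only point that needs care is stating clearly that validity depends only on the product along a completed path, not on how informative the $s_k$ are. The prefix scores affect the variance of the SMC approximation, not its target.
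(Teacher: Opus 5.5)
Your proof is correct and follows the paper's argument: you sum the log-potentials, telescope the intermediate increments to $\beta(s_{K-1}-s_0)$, cancel this against the terminal term, and use that $s_0$ is constant. The additional checks you include are consistent with the paper's proof but not spelled out in it: admissibility of each $G_k$ as a function of the state at $(k,S)$, the $K=1$ edge case, the explicit marginalization of intermediate states, and the $\beta=1$ reading via $s_K=r$.
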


In twisted sequential Monte Carlo \citep{guarniero2017iapf,heng2020controlled,zhao2024twisted} the ideal intermediate potential is the optimal twist $\psi_k(x^{(\le k)}) = \mathbb{E}_p[\exp(r(x)/\alpha) \mid x^{(\le k)}]$, which must be estimated. Proposition~\ref{prop:prefix} says that $\beta s_k / \alpha$ can play the role of $\log \psi_k$ for any prefix score without estimation and without changing the target: a poor score costs variance in the intermediate weights, never a wrong target; for a prefix-evaluable reward the natural choice $s_k = r(x^{(\le k)})$ has the accuracy of the reward on partial sequences.

\subsection{\method{}}
\label{sec:algorithm}

\method{} is the sequential Monte Carlo sampler for the chunk-axis Feynman-Kac model with the potentials of Propositions~\ref{prop:axis} and~\ref{prop:prefix} (Figure~\ref{fig:framework}(a); pseudocode in Appendix~\ref{app:algorithm}). \textbf{Potentials.} For a chunk-additive reward the potential at the end of chunk $k$ is $\exp(r_k(x^{(\le k)})/\alpha)$; for a terminal reward with a prefix-evaluable score it is the increment potential of Proposition~\ref{prop:prefix} with the terminal correction. \textbf{Resampling.} At the end of every chunk $k < K$ the weights are multiplied by the potential and, if $\mathrm{ESS} < N/2$, the set is resampled systematically \citep{doucet2011tutorial} and the weights reset; the gate keeps the set intact through chunks whose potential carries no information, such as the opening chunks of a motif-recurrence reward. When the plug-in reward is free, potentials can also sit at a fixed subset of denoising steps inside each chunk, with the same weights by Proposition~\ref{prop:axis} and earlier resampling times: the dense schedule. \textbf{Return modes.} After the last chunk the sampler returns a particle drawn in proportion to its weight (sampling mode: a draw from the particle approximation of $\pi_{\alpha}$, consistent as $N \to \infty$) or the highest-reward particle (search mode).

The cost is $N K S$ denoiser evaluations (times the guidance multiplicity), $K N$ reward evaluations and, on a latent generator, $K N$ decoder calls. Every other method is the same loop with a different set of lattice points at which the potential is not one and a different resampling operator, which is how all comparisons below are placed at equal budget; since a denoising-axis potential on a latent generator costs a decoder and a reward pass per particle per event, we also report model calls and wall-clock.

\section{Experiments and Results}
\label{sec:experiments}

\subsection{Setup}
\label{sec:setup}

\textbf{Generators} (Appendix~\ref{app:testbeds}). The first testbed is a music-to-dance diffusion model on AIST++ \citep{li2021aistpp} in the EDGE representation \citep{tseng2023edge}, made chunked by conditioning each 150-frame chunk on the clean previous one with the root integrated from the prefix, so the prefix is fixed and the seam continuous; it generates $K = 4$ chunks of 5 s ($K = 6$ and $8$ in Section~\ref{sec:costaxes}) with $S = 50$ DDIM steps \citep{song2021ddim} and classifier-free guidance, at 4M and 38M parameters. The second is a 40-second text-to-music flow model \citep{lipman2023flow} that extends a clip by inpainting past its edge with the prefix latents replaced at each of its $S = 8$ steps, $K = 4$ chunks of 10 s; each prefix score costs one decode and one CLAP \citep{wu2023clap} pass, which we count.

\textbf{Rewards.} On dance, \emph{beat alignment} is chunk-additive and chunk-local (kinematic beats against music beats), and \emph{repetition} scores chunk $k$ against chunk $k-2$, additive but long range, with no signal until the third chunk; it is a controlled long-range reward, not a quality target (real dancers score $0.412$, the base model $0.718$). On music, \emph{prompt adherence} is the CLAP score of the full clip against its prompt, terminal and evaluable on any prefix, and \emph{motif recurrence} the similarity of the last chunk to the first, with a prefix score at the last boundary. Held out from every steering decision are physical plausibility, realism and pose diversity on dance, and tempo adherence, spectral energy, aesthetics \citep{tjandra2025audiobox}, seam continuity and embedding diversity on music (Appendix~\ref{app:rewards}).

\textbf{Methods and Budget.} All methods run in the same loop with the same kernels and noise streams: best-of-N with importance draw and with argmax, chunk pruning in the manner of Stream-T1 \citep{streamt12026}, its top-$k$ retention isolated at $M = \max(1, N/4)$ over the common kernel (swept in Section~\ref{sec:costaxes}), learned-twist SMC on dance (Section~\ref{sec:lookahead}), and \method{} with the chunk potential on the dance rewards and the prefix-score potential on the music rewards, its dense schedule scoring at five (dance) or four (music) steps per chunk. We run \method{} with both schedules. Its dense schedule is the Feynman-Kac steering placement: on a chunk-additive reward the two are the same algorithm (Proposition~\ref{prop:axis}), and on the music rewards the plug-in CLAP differences telescope to the prefix-score increments of Proposition~\ref{prop:prefix} (products agreeing to $10^{-14}$ on every stored lineage), so that row is the strongest form of the baseline and our own schedule at once. Budgets are $N \in \{1, 2, 4, 8, 16, 32\}$ particles at $NKS$ denoiser evaluations times the guidance multiplicity (400$N$ on dance, 32$N$ on music), identical across methods. The temperature is set from base samples before any steered sample exists (Appendix~\ref{app:impl}): $\alpha = 0.0068$ and $0.019$ on the two music rewards, and the sharpest value of the sweep $\alpha \in \{0.2, 0.05, 0.02\}$ on dance, reported in full (Table~\ref{tab:rba_full}) and shared by every method. Results are means over 40 conditions with 4 seeds on dance and 2 on music, with 95\% percentile bootstrap intervals over conditions; method differences are paired bootstraps, and an asterisk marks an interval that excludes zero.

\begin{figure}[t]
\centering
\includegraphics[width=\textwidth]{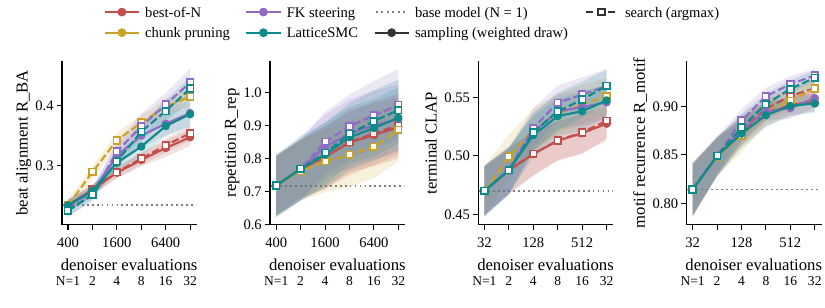}
\caption{\textbf{Reward against denoiser evaluations at matched budget.} Four rewards, $N \in \{1, \dots, 32\}$, means over 40 conditions with 95\% bootstrap intervals; solid: sampling rule (draw), dashed: search rule (argmax). \method{} uses the chunk potential on dance ($\alpha = 0.02$) and the prefix-score potential on music; FK steering is its dense schedule.}
\label{fig:curves}
\vspace{-6pt}
\end{figure}

\begin{table}[t]
\caption{Reward of the returned sequence at $N = 32$ particles and identical denoiser evaluations, under the sampling rule (draw) and the search rule (argmax), with 95\% bootstrap intervals for the base. A star marks the schedule of \method{} its cost model prescribes, dense where the plug-in reward is free (dance) and boundary where scoring costs a decode (music); the dense schedule is the Feynman-Kac steering placement (Proposition~\ref{prop:axis}), so that row is also the baseline. Chunk pruning is at $M = N/4$ (swept in Section~\ref{sec:costaxes}). Dash: return rule absent or not run; $^{a}$: additive scale, not ranked. Bold: best per column; underlined: second best; paired differences with intervals in Table~\ref{tab:intervals}.}
\label{tab:main}
\vspace{-4pt}
\begin{center}
\footnotesize
\setlength{\tabcolsep}{3.5pt}
\resizebox{\textwidth}{!}{\begin{tabular}{l cc cc cc cc}
\toprule
& \multicolumn{2}{c}{Beat alignment} & \multicolumn{2}{c}{Repetition} & \multicolumn{2}{c}{Prompt adherence} & \multicolumn{2}{c}{Motif recurrence} \\
& draw & argmax & draw & argmax & draw & argmax & draw & argmax \\
\midrule
Base ($N = 1$) & \multicolumn{2}{c}{0.234 [0.223, 0.243]} & \multicolumn{2}{c}{0.718 [0.628, 0.804]} & \multicolumn{2}{c}{0.470 [0.449, 0.491]} & \multicolumn{2}{c}{0.814 [0.786, 0.839]} \\
Best-of-N & 0.347 & 0.354 & 0.895 & 0.901 & 0.527 & 0.530 & \underline{0.904} & 0.918 \\
Chunk pruning (top-$M$) & -- & 0.415 & -- & 0.887 & -- & 0.551 & -- & 0.918 \\
Twisted SMC, learned twist & \underline{0.385} & 0.419$^{a}$ & 0.871 & 0.895 & -- & -- & -- & -- \\
\method{}, dense (FK steering placement) & \textbf{0.387}$^{\star}$ & \textbf{0.441}$^{\star}$ & \underline{0.921}$^{\star}$ & \textbf{0.962}$^{\star}$ & \underline{0.546} & \underline{0.559} & \textbf{0.908} & \textbf{0.932} \\
\method{}, boundary & \underline{0.385} & \underline{0.428} & \textbf{0.924} & \underline{0.947} & \textbf{0.547}$^{\star}$ & \textbf{0.560}$^{\star}$ & 0.903$^{\star}$ & \underline{0.929}$^{\star}$ \\
\bottomrule
\end{tabular}}
\end{center}
\end{table}

\subsection{Chunk-axis resampling is the compute-efficient placement}
\label{sec:mainresult}

Figure~\ref{fig:curves} plots reward against denoiser evaluations, Table~\ref{tab:main} the $N = 32$ endpoint under both return rules, and Tables~\ref{tab:rba_full} to~\ref{tab:motif_full} every budget. Three results hold across every reward, modality and model size.

First, selecting at chunk boundaries beats selecting at the end: under the draw rule the chunk potentials raise the reward over terminal importance sampling by $0.038$ on beat alignment, $0.028$ on repetition and $0.020$ on prompt adherence at $N = 32$ (intervals exclude zero), and under the search rule the gaps are larger: with its prescribed schedule \method{} reaches $0.441$ on beat alignment against $0.354$ for best-of-N argmax, $0.962$ against $0.901$ on repetition, $0.560$ against $0.530$ on prompt adherence and $0.929$ against $0.918$ on motif recurrence. On dance that schedule is the Feynman-Kac steering placement; the lattice view adds the exactness proof, the matched-compute comparison under one return rule, and the boundary schedule, which matches it under the sampling rule at a fifth of the reward evaluations.

Second, under a common return rule, ESS-gated resampling beats top-$M$ replication at $M = N/4$ at the largest budgets on every reward, and at every $M$ on the long-range rewards (Section~\ref{sec:costaxes}). With argmax return the boundary schedule alone is ahead of chunk pruning at $N = 32$ by $0.013$ on beat alignment, $0.059$ on repetition, $0.009$ on prompt adherence and $0.011$ on motif recurrence (intervals exclude zero), and the prescribed dense schedule by $0.026$ and $0.075$ on dance. On repetition it is ahead under the draw rule as well, by $0.036$, because pruning replicates blind for two chunks while the gate keeps the full set until the reward arrives. The two schedules are Proposition~\ref{prop:axis} observed (Table~\ref{tab:main}; intervals in Table~\ref{tab:intervals}): within $0.006$ under the sampling rule at $N = 32$ on the 4M model and on music, within $0.024$ at 38M and at every budget; under the search rule the dense schedule is ahead on dance by $0.013^{*}$ on beat alignment at 4M ($0.017^{*}$ at 38M) and by $0.016$ on repetition (interval containing zero at 4M, $0.051^{*}$ at 38M), the value of earlier resampling when scoring is free, and within $0.006$ on music (intervals containing zero), where every dense event costs a decode.

Third, the held-out check shows no sign of reward exploitation (Table~\ref{tab:heldout}; repetition and motif runs in Tables~\ref{tab:prunesweep}, \ref{tab:k6} and~\ref{tab:k8}): on dance every held-out quantity stays inside the base model's bootstrap interval for every method; on music tempo adherence, aesthetics and seam continuity stay within 6 percent of base, while high-frequency energy falls by up to 12 percent and embedding diversity by up to 21 percent under every steering method, as selection among samples must.

\subsection{The return rule is worth as much as the resampling scheme}
\label{sec:returnrule}

\begin{table}[t]
\caption{The return rule and the crossover for the boundary schedule of \method{}, so that the crossover measures resampling on chunk potentials alone: paired difference of \method{} (argmax) minus chunk pruning by particle count (asterisk: 95\% interval excludes zero; dance values at $N \ne 8, 32$ on the additive logged scale) and the gain of argmax over draw at $N = 32$; the 38M model reproduces every crossover (Figure~\ref{fig:crossover}).}
\label{tab:crossover}
\vspace{-4pt}
\begin{center}
\footnotesize
\setlength{\tabcolsep}{4pt}
\resizebox{\textwidth}{!}{\begin{tabular}{l ccccc c}
\toprule
\method{} (argmax) $-$ pruning at $N =$ & 2 & 4 & 8 & 16 & 32 & argmax - draw \\
\midrule
Beat alignment & $-0.032^{*}$ & $-0.029^{*}$ & $-0.015^{*}$ & $-0.002$ & $+0.013^{*}$ & $+0.042^{*}$ \\
Repetition & $+0.007$ & $+0.026^{*}$ & $+0.063^{*}$ & $+0.078^{*}$ & $+0.059^{*}$ & $+0.023^{*}$ \\
Prompt adherence & $-0.012^{*}$ & $-0.002$ & $+0.002$ & $+0.004^{*}$ & $+0.009^{*}$ & $+0.012^{*}$ \\
Motif recurrence & $+0.001$ & $+0.008$ & $+0.007^{*}$ & $+0.012^{*}$ & $+0.011^{*}$ & $+0.026^{*}$ \\
\bottomrule
\end{tabular}}
\end{center}
\end{table}

Each proper method returns a draw from its normalized weights, the particle approximation of $\pi_{\alpha}$, or the best of the same particles; switching from the first to the second adds $0.012$ to $0.042$ at $N = 32$ (Table~\ref{tab:crossover}, last column), and the gain grows with $N$. The comparison with greedy pruning, which always returns its best, therefore has to be made under one rule; Table~\ref{tab:crossover} makes it by $N$ (Figure~\ref{fig:crossover}). Pruning leads at the smallest budgets on beat alignment and prompt adherence; \method{} with argmax return crosses it by $N = 32$ on every reward ($N = 4$ on motif recurrence, $N = 8$ on prompt adherence) and leads from $N = 4$ on repetition under both rules.

\subsection{Tilt, horizon, temperature and cost}
\label{sec:costaxes}

\begin{table}[t]
\caption{Horizon and per-chunk tilt, 4M dance model, $N = 32$, argmax return, $\alpha = 0.02$ unless stated. $K = 4$ and $6$ use the 40 conditions, $K = 8$ the 16 with a long enough recording; repetition is the sum over the $K - 2$ chunk pairs, beat alignment a mean over beats; $\alpha = 0.01$ is the prespecified second point, equal to the per-boundary tilt $\alpha \propto 1/K$ at $K = 8$, run on beat alignment, where the fixed-$\alpha$ ordering reversed. Best per column in bold, second best underlined; dash: not run. Paired differences, resampling rates, base and draw rows: Appendix~\ref{app:horizon}.}
\label{tab:horizon}
\vspace{-4pt}
\begin{center}
\footnotesize
\setlength{\tabcolsep}{4pt}
\begin{tabular}{l ccc ccc}
\toprule
& \multicolumn{3}{c}{Beat alignment} & \multicolumn{3}{c}{Repetition} \\
$K =$ & 4 & 6 & 8 & 4 & 6 & 8 \\
\midrule
Best-of-N & 0.354 & 0.333 & 0.335 & 0.901 & 1.795 & 2.544 \\
Chunk pruning ($M = N/4$) & 0.415 & 0.410 & 0.434 & 0.887 & \underline{1.830} & 2.555 \\
\method{}, boundary & \underline{0.428} & 0.393 & 0.382 & \underline{0.947} & \textbf{1.930} & \underline{2.676} \\
\method{}, dense & \textbf{0.441} & 0.400 & 0.396 & \textbf{0.962} & \textbf{1.930} & \textbf{2.711} \\
\method{}, boundary / dense, $\alpha = 0.01$ & -- & \underline{0.436} / \textbf{0.441} & \textbf{0.455} / \underline{0.450} & -- & -- & -- \\
\bottomrule
\end{tabular}
\end{center}
\end{table}

\textbf{Tilt.} How sharply the potentials tilt is the second dial of a placement, and the lattice fixes its far end: chunk pruning with $M = 1$, one lineage kept at every boundary, is the $\alpha \to 0$ limit of the boundary schedule with argmax return when the maximum is unique. Sweeping $M \in \{1, N/4, N/2\}$ at $N = 32$ on all four rewards (Table~\ref{tab:prunesweep}) continues the temperature sweep of Figure~\ref{fig:temperature} to its corner, and the outcome follows the information in the potentials. On the two rewards whose potential carries the reward at every boundary, the sharpest tilt is the best setting in the study under the search objective, where $\alpha$ is free: $\alpha \to 0$ reaches $0.473$ on beat alignment ($+0.045^{*}$ over the boundary argmax at $\alpha = 0.02$) and $0.564$ on prompt adherence ($+0.004^{*}$), with the held-out quality metrics at base and the lowest seed diversity of the sweep. On the two rewards that arrive late, whose potentials are uninformative at the early boundaries, the ordering reverses: $\alpha \to 0$ is the worst pruning variant, $M = N/2$ the best, and every $M$ trails the ESS-gated boundary schedule, by $0.031$ to $0.096$ on repetition and $0.006$ to $0.020$ on motif recurrence (intervals exclude zero). The ESS gate follows that information at any $\alpha$, since equal potentials leave the set intact; top-$M$ replication commits at every boundary and so needs its $M$ chosen per reward.

\textbf{Horizon.} Table~\ref{tab:horizon} extends the dance testbed to $K = 6$ (30 s, all 40 conditions) and $K = 8$ (40 s, the 16 conditions with a long enough recording). On the long-range reward, a sum over the $K - 2$ chunk pairs, the boundary schedule leads pruning at every horizon, by $+0.100^{*}$ at $K = 6$ and $+0.121^{*}$ at $K = 8$ on the sum, which is $0.025$ and $0.020$ per pair against $0.030$ at $K = 4$, and leads best-of-N by $+0.135^{*}$ at $K = 6$ and $+0.133$ at $K = 8$ (interval over 16 conditions containing zero); the dense schedule ties it. On beat alignment the lead holds at every horizon once the tilt is set per boundary: at $\alpha = 0.01$, the prespecified second point, the boundary argmax leads pruning by $+0.026^{*}$ at $K = 6$ and $+0.021^{*}$ at $K = 8$, both schedules lead best-of-N by $0.10$ to $0.12$, and every held-out quantity lies inside the base interval on the same conditions. The rows at the $K = 4$ value $\alpha = 0.02$ are the ablation that says why: beat alignment is a mean over the beats of the whole sequence, so chunk $k$'s term is its share of that mean, with a range that shrinks as $1/K$, and at fixed $\alpha$ so does the per-boundary tilt $r_k/\alpha$; the ESS gate then fires at 64 and 50 percent of the boundaries instead of 91, and pruning, which commits whatever $\alpha$ is, moves ahead by $0.017^{*}$ and $0.052^{*}$ (both leave the base foot-contact interval at $K = 8$, Appendix~\ref{app:horizon}). For a reward normalized over the sequence, the tilt to hold fixed across horizons is $\alpha \propto 1/K$.

\textbf{Temperature, Scale and Cost.} The temperature sweep (Figure~\ref{fig:temperature}) shows the two schedules coinciding at every $\alpha$; a 38M dance model trained with the same recipe leaves every pairwise ordering and the crossover against pruning unchanged (Table~\ref{tab:scale}). On dance the plug-in reward is kinematic and the three cost axes coincide; on music a within-chunk event costs a decode and a CLAP pass, so at $N = 32$ the dense schedule makes four times the boundary schedule's decoder calls (512 against 128) at $3.6$ times its wall-clock for the same reward (Figure~\ref{fig:cost}).

\subsection{Human evaluation}
\label{sec:human}

To test whether a higher reward is perceptible and costs no quality that metrics miss, we ran a pairwise preference study on the two rewards with a direct perceptual reading, beat alignment and prompt adherence (Appendix~\ref{app:human}). For each of the 40 conditions of each testbed, five Prolific raters compared the sequence returned by \method{} (prescribed schedule, argmax, $N = 32$) against the base model, best-of-N argmax and chunk pruning, with side and order randomized and attention checks, on two questions: which follows the beat or matches the description better, and which they would rather watch or hear (Table~\ref{tab:human}). On the reward question \method{} is preferred over the base model in 73.5 percent of dance and 77.0 percent of music judgements, over best-of-N in 66.0 and 69.0, and over chunk pruning in 69.0 and 71.5; on the overall question its win rate against every comparator is 60.5 to 69.0 percent; all twelve intervals lie above 50 percent. Per-condition reward differences and win rates correlate positively for every comparator (Spearman $\rho$ $0.23$ to $0.67$): the reward gain is perceptible and does not cost perceived quality.

\section{Analysis: When Content-Axis Lookahead Pays}
\label{sec:lookahead}

\subsection{What a twist can know is a within-set quantity, and so is its accuracy}
\label{sec:predictability}

The experiments above use lookahead only where the reward supplies a prefix score; this section asks when a twist, the estimated future reward of twisted sequential Monte Carlo \citep{guarniero2017iapf,heng2020controlled,zhao2024twisted}, can improve resampling at all (Figure~\ref{fig:lookahead}, Appendix~\ref{app:lookahead}). Write $S_k = \sum_{j > k} r_j$ for the reward still to come after chunk $k$ ($S_k = r$ for a terminal reward). The optimal chunk-axis twist is $\psi_k(x^{(\le k)}) = \mathbb{E}_p[\exp(S_k/\alpha) \mid x^{(\le k)}, c]$, and it enters the sampler through the ratio $\psi_k / \psi_{k-1}$ multiplied into weights that are then normalized over the $N$ particles. Every particle in a set shares $c$, so any component of $\log \psi_k$ that depends on $c$ alone cancels in the normalized weights, and the base model already realizes the variation of $S_k$ the prefix does not determine; a twist can help only to the extent that the prefix determines the future reward within a set (Figure~\ref{fig:lookahead}(a)): for prefixes drawn under one condition, each continued $M$ times:
\begin{equation}
\mathrm{Var}[S_k \mid c] \;=\; \underbrace{\mathrm{Var}_{x^{(\le k)}}\big[\mathbb{E}[S_k \mid x^{(\le k)}, c]\big]}_{\text{between-prefix spread: ranking signal}} \;+\; \underbrace{\mathbb{E}_{x^{(\le k)}}\big[\mathrm{Var}[S_k \mid x^{(\le k)}, c]\big]}_{\text{within-prefix spread: rollout noise}},
\label{eq:decomp}
\end{equation}
and the first term's share of the total is the \emph{within-set predictability} $\rho_k$, estimated from base-model rollouts (Appendix~\ref{app:decomp}). It is a diagnostic on the conditional mean of $S_k$, while the twist is an exponential moment, so it is paired with the same decomposition of $\exp(S_k/\alpha)$ (Appendix~\ref{app:decomp}) and with direct within-set measurements of $\log\psi_k$ (Appendix~\ref{app:withinset}). For beat alignment the between-prefix share of $\exp(S_k/\alpha)$ is at most $0.14$ even pooled across conditions, so the optimal twist is close to a constant. For prompt adherence the prefix explains $0.47$, $0.62$ and $0.66$ of the terminal variance after one, two and three chunks. For repetition the pooled share is $0.9$, but within a set of same-condition particles the rollout target spreads by $2.9$ nats, about $2$ of them Monte Carlo error at $M = 4$ (Appendix~\ref{app:twistvariants}), so the within-set share is far below the pooled value.

\begin{table}[t]
\caption{Twists on the repetition reward at $\alpha = 0.02$, $N = 32$ (4M unless stated): cross-prompt fit, within-set Pearson correlation with the oracle after chunk one, root-mean-square residual against the rollout target on 500 fresh prefixes in nats, and effect relative to $\psi \equiv 1$ with paired intervals; the oracle is not budget-matched; dash: not measured.}
\label{tab:twist}
\vspace{-4pt}
\begin{center}
\footnotesize
\setlength{\tabcolsep}{4pt}
\resizebox{\textwidth}{!}{\begin{tabular}{lcccc}
\toprule
Twist & cross-prompt $R^2$ & within-set corr. & residual (nats) & effect at $N = 32$ \\
\midrule
$\psi \equiv 1$ (exact potentials only) & -- & -- & -- & $0$ \\
Learned regressor & $0.90$ & $-0.04$ & $4.3$ & $-0.053\ [-0.083, -0.028]$ \\
Learned regressor, 38M model & $0.91$ & -- & $3.3$ & $-0.018\ [-0.032, -0.006]$ \\
Regressor trained on within-set weights & -- & $0.01$ & -- & $-0.014\ [-0.033, -0.002]$ \\
Oracle, 16 rollouts per particle & -- & $1$ & $\approx 0.6$ & $+0.058\ [+0.034, +0.083]$ \\
\bottomrule
\end{tabular}}
\end{center}
\end{table}

The same cancellation says how a twist must be evaluated. On the repetition reward a regressor trained on rollouts predicts realized future reward across held-out prefixes with a cross-prompt $R^2$ of $0.90$, a number dominated by between-condition structure that cancels inside a set; on the quantity resampling consumes, the same regressor ranks the particles of a set almost at random, with a within-set correlation to an oracle twist on the same particles of $-0.04$ after the first chunk and $0.35$ after the second (Figure~\ref{fig:lookahead}(b)). Weights are exponentials of these values, so its residual of $4.3$ nats on fresh prefixes ($6.0$ within a set) is a weight error of a factor $10$ to $900$ on an uninformative ranking, and resampling on it gives the outcome in Table~\ref{tab:twist}: a twist with $0.90$ cross-prompt fit lowers the reward by $0.05$ at $N = 32$ relative to the exact-potential sampler ($\psi \equiv 1$) on the 4M model and by $0.02$ on the 38M model. Tempering, ensembling or shrinking the twist converge back onto the $\psi \equiv 1$ sampler (Appendix~\ref{app:twistvariants}): an absent ranking cannot be recovered by reweighting, so the numbers to design against are the within-set correlation with realized future reward and the weight-level error in nats.

\subsection{A design rule}
\label{sec:sources}

A twist has three sources (Figure~\ref{fig:lookahead}(c), Appendix~\ref{app:lookahead}): a prefix-evaluable reward, exact by Proposition~\ref{prop:prefix} at no extra denoiser cost ($+0.020$ terminal CLAP at $N = 32$); rollouts, unbiased but charged, which set the headroom ($+0.058$ on repetition) yet lose to additional particles at matched cost for $M \le 4$; and a learned predictor, whose rollout-derived targets carry about 2 nats of Monte Carlo error against a within-set spread of about 3, so that none we trained recovered a within-set ranking. Resample on the exact potentials with ESS gating, so that commitment follows the information each boundary carries, and set the tilt by the objective: under search, as sharp as the reward's arrival allows, up to the $\alpha \to 0$ corner for a chunk-local reward; under sampling, $\alpha$ defines the target; in both, hold it fixed per boundary across horizons. Then measure $\rho_k$ from base-model rollouts before adding lookahead: if it is small, add none; if the reward can be evaluated on a prefix, use the prefix score at full strength; if not, validate any estimated twist by its within-set correlation and residual in nats, never by its cross-prompt fit.

\section{DISCUSSION AND CONCLUSION}
\label{sec:conclusion}

Inference-time steering of a chunked generator is one Feynman-Kac model on the lattice of chunk index and denoising step, and two telescoping identities fix its target: identical axis weights for chunk-additive rewards, and exact prefix-score twists for terminal rewards. \method{}, the sampler they prescribe, ties or leads every method of the main comparison at matched compute across four rewards, two model sizes, and three horizons, while human evaluators also prefer its outputs. The remaining design choices are largely determined by two properties of the reward. First, commit at each boundary in proportion to the information carried by its exact potential, which ESS gating can read off adaptively and greedy pruning cannot. Second, add lookahead only in proportion to the within-set predictability of the future reward, so that extra compute is spent only when future scores are sufficiently informative for the current candidate set.

\textbf{Limitations.} The horizon study is on dance, since the music generator's 40 s context does not carry the prompted tempo over 80 s (Appendix~\ref{app:horizon}); the dance generators are small models trained for this study; the human study covers two rewards; gradient guidance, which composes with every placement, is not compared; and the propositions fix the target, not finite-particle behaviour, so a bound relating $\rho_k$ to the incremental-weight variance is open.

\clearpage

\bibliography{main}
\bibliographystyle{iclr2027_conference}

\newpage
\appendix

\begin{center}
    {\fontsize{17}{22}\selectfont APPENDIX}
    \vspace{1.5em}
\end{center}

\section{The Lattice Model: Proofs and Details}
\label{app:method}

\subsection{Proofs}
\label{app:proofs}

\begin{proof}[Proof of Proposition~\ref{prop:axis}]
Fix a lineage and a chunk $k$, and write the steps of $\mathcal{T}_k$ in increasing order as $\tau_1 < \dots < \tau_m = S$. The difference potentials are $G_{k,\tau_j} = \exp\big((\hat r_{k,\tau_j} - \hat r_{k,\tau_{j-1}})/\alpha\big)$ for $j \ge 2$ and $G_{k,\tau_1} = \exp\big((\hat r_{k,\tau_1} - r_{\mathrm{prefix}})/\alpha\big)$, where $r_{\mathrm{prefix}} = \sum_{j<k} r_j(x^{(\le j)})$ is the lineage's accumulated reward through chunk $k-1$ and every $\hat r$ is the additive reward of the prefix plus the chunk term evaluated on the plug-in prediction at that step. Summing the log-potentials telescopes to $(\hat r_{k,S} - r_{\mathrm{prefix}})/\alpha$. At the final step the plug-in prediction is the sampled chunk itself, so $\hat r_{k,S} = r_{\mathrm{prefix}} + r_k(x^{(\le k)})$ and the sum equals $r_k(x^{(\le k)})/\alpha$, which is \Cref{eq:telescope}. Resampling copies whole lineages together with their accumulated $\hat r$ values, so the identity holds along every lineage regardless of the resampling events. The product over $k$ is $\exp(r(x)/\alpha)$, so both models are valid for $\pi_{\alpha}$ with the same path measure; they differ only in the set of lattice points at which the potential is not one, which is the set of admissible resampling times.
\end{proof}

\begin{proof}[Proof of Proposition~\ref{prop:prefix}]
The log-potentials of \Cref{eq:prefix} sum to $\frac{\beta}{\alpha}\sum_{k=1}^{K-1}(s_k - s_{k-1}) + \frac{1}{\alpha}(r - \beta s_{K-1}) = \frac{1}{\alpha}(r - \beta s_0)$, and $s_0$ is a constant, so the product of potentials is $\exp(r/\alpha)$ up to a constant along every trajectory and the Feynman-Kac model is valid for $\pi_{\alpha}$. Setting $\beta = 0$ makes every intermediate potential one and the terminal potential $\exp(r/\alpha)$, which is best-of-N importance sampling.
\end{proof}

\subsection{The plug-in potential and the optimal twist}
\label{app:jensen}

Along the denoising axis, for a chunk-additive reward, the optimal twist at step $\tau$ of chunk $k$ for the current chunk's term is $\psi_{k,\tau}(s_{k,\tau}) = \mathbb{E}_p\big[\exp(r_k/\alpha) \mid s_{k,\tau}\big]$. Feynman-Kac steering replaces it by the plug-in $\exp\big(\hat r_{k,\tau}/\alpha\big)$ with $\hat r$ evaluated on the posterior-mean prediction $\hat x_0$. Since $u \mapsto \exp(u/\alpha)$ is convex, Jensen's inequality gives $\psi_{k,\tau} \ge \exp\big(\mathbb{E}[r_k \mid s_{k,\tau}]/\alpha\big)$, and the plug-in approximates the right-hand side up to the further gap between $r_k(\hat x_0)$ and $\mathbb{E}[r_k \mid s_{k,\tau}]$ for a nonlinear reward. Both gaps grow with the conditional variance of the current chunk given the noisy state, which is largest at the early denoising steps and vanishes at $\tau = S$, where the prediction is the sample. This is the sense in which denoising-axis lookahead on the current chunk is accurate late and uninformative early; Proposition~\ref{prop:axis} guarantees that whatever the plug-in's accuracy at intermediate steps, the chunk's weight at its end is exact, which fixes the target and not the finite-particle behaviour of the intermediate resampling events.

\subsection{Algorithm}
\label{app:algorithm}

\begin{algorithm}[h]
\caption{\method{}}
\label{alg:latticesmc}
\begin{algorithmic}[1]
\REQUIRE base kernels $M_{k,\tau}$, condition $c$, reward $r$ (chunk terms $r_k$ or prefix scores $s_k$), temperature $\alpha$, particles $N$, schedule $\in \{\text{boundary}, \text{dense}\}$, return mode $\in \{\text{draw}, \text{argmax}\}$
\STATE initialize $N$ empty prefixes, log-weights $\ell_i \gets 0$
\FOR{$k = 1, \dots, K$}
    \STATE $a_i \gets 0$ \hfill (log-potential already applied within chunk $k$; copied with its lineage on resampling)
    \FOR{$\tau = 1, \dots, S$}
        \STATE advance every particle one denoising step with $M_{k,\tau}$
        \IF{schedule is dense and $\tau \in \mathcal{T}_k \setminus \{S\}$}
            \STATE $\delta_i \gets \big(\hat r_{k,\tau}^{(i)} - \hat r_{\mathrm{prev}}^{(i)}\big)/\alpha$;\ \ $\ell_i \mathrel{+}= \delta_i$;\ \ $a_i \mathrel{+}= \delta_i$;\ \ \textbf{if} $\mathrm{ESS} < N/2$ \textbf{then} resample systematically, $\ell_i \gets 0$
        \ENDIF
    \ENDFOR
    \STATE $\ell_i \mathrel{+}= \log G_k^{(i)} - a_i$ \hfill (remaining part of the chunk term $r_k/\alpha$, or of the prefix-score increment \eqref{eq:prefix})
    \IF{$k < K$ and $\mathrm{ESS} < N/2$} \STATE resample systematically, $\ell_i \gets 0$ \ENDIF
\ENDFOR
\RETURN $x^{(i^\star)}$ with $i^\star \sim \mathrm{Cat}(\mathrm{softmax}(\ell))$ (draw) or $i^\star = \arg\max_i r(x^{(i)})$ (argmax)
\end{algorithmic}
\end{algorithm}

\subsection{Existing methods as lattice placements}
\label{app:placements}

\begin{table}[h]
\caption{Inference-time steering methods as placements on the lattice. IS: importance resampling at the terminal corner; SYS: systematic resampling gated by $\mathrm{ESS} < N/2$; TOP-$M$: deterministic replication of the top $M$ particles.}
\label{tab:placements}
\begin{center}
\footnotesize
\resizebox{\textwidth}{!}{\begin{tabular}{lllll}
\toprule
Method & Lattice points with $G \neq 1$ & Potential & Resampling & Return \\
\midrule
Best-of-N & $(K, S)$ & $\exp(r/\alpha)$ & IS or none & draw or argmax \\
Feynman-Kac steering & $(k, \tau)$, $\tau \in \mathcal{T}_k$ & plug-in difference & SYS & draw \\
Chunk pruning & $(k, S)$ & prefix reward, $\alpha \to 0$ & TOP-$M$ & argmax \\
\method{}, boundary & $(k, S)$ & exact chunk term or prefix increment & SYS & draw or argmax \\
\method{}, dense & $(k, \tau)$ and $(k, S)$ & as above, plus plug-in differences & SYS & draw or argmax \\
\bottomrule
\end{tabular}}
\end{center}
\end{table}

Best-of-N with importance resampling and best-of-N with argmax share their particle set and differ only in the final index. Feynman-Kac steering uses the difference potentials of Proposition~\ref{prop:axis} at a fixed subset of denoising steps and systematic resampling below $\mathrm{ESS} = N/2$; it does not resample at chunk boundaries, and its lineages carry their prefixes across chunks. Chunk pruning keeps the top $M = \max(1, \lfloor N/4 \rfloor)$ particles by prefix reward at every chunk boundary and expands each to $N/M$ continuations, which is deterministic replication rather than resampling; its return is the argmax. \method{} with $\psi \equiv 1$ is the sampler with the exact chunk potentials of Proposition~\ref{prop:axis} and no other intermediate potential; with a prefix score it is the sampler of Proposition~\ref{prop:prefix}. Every method is run inside the same loop with the same kernels and the same per-slot noise streams.

\subsection{Implementation details}
\label{app:impl}

\textbf{Noise Streams and Reproducibility.} Each particle slot $i$ of chunk $k$ draws its denoising noise from a generator seeded by $(\text{seed}, k, i)$, and a resampled particle continues with the stream of the slot it occupies. Every resampling decision and the final draw consume a separate generator seeded by $(\text{seed}, \text{method})$. Two runs of any method with the same seed produce bitwise-identical particle systems, and a method whose potentials are all one reproduces the base sampler bitwise, which is the check we use to validate every implementation.

\textbf{Effective Sample Size and Resampling.} With normalized weights $\bar w_i = w_i / \sum_j w_j$, $\mathrm{ESS} = 1/\sum_i \bar w_i^2$. Systematic resampling draws a single uniform $u \sim \mathcal{U}[0,1/N)$ and selects the particles whose cumulative weight intervals contain $u + j/N$ for $j = 0,\dots,N-1$; it preserves the expected count of every particle and, in practice though not in general, has lower variance than multinomial resampling \citep{douc2005resampling}.

\textbf{Cost Accounting.} The denoiser-evaluation counter is incremented by the number of rows in every denoiser forward call, so classifier-free guidance implemented as a doubled batch is charged twice per particle per step and guidance implemented as a single conditioned pass is charged once, matching the generator's actual cost. Decoder calls and reward evaluations are separate counters. For every reported sequence the counters are audited against the closed-form value $N K S$ times the guidance multiplicity.

\textbf{Hardware, Software and Timing.} All runs used two NVIDIA RTX A6000 (48 GB) GPUs in one machine. The dance harness runs PyTorch 2.14 (CUDA 13.0) in float32 with TF32 off, deterministic algorithms and cuDNN determinism on, the math attention kernel only, and all noise drawn from CPU generators; the music harness runs PyTorch 2.7.1 (CUDA 12.6) in the checkpoint's parameter precisions with a decoder seed pinned per (prompt, seed). Bitwise reproduction was verified by regenerating the 160 base sequences of each dance model three times, once with every sequence moved to the other physical GPU (maximum absolute difference 0), by the argmax regenerations, which reproduce the stored particle sets in 160 of 160 dance and 320 of 320 music replays, and by the music latents and waveforms across processes; a chunk generated inside a batch of $N$ particles differs from the same chunk generated alone by up to $6 \times 10^{-5}$, so reproduction holds for a fixed (method, $N$). Wall-clock per sequence is the generating process's own timer, averaged over the 80 sequences of the $N = 32$ row, with three processes per GPU sharing the two GPUs and the $N$ particles of a sequence forming one denoiser batch, in a single run; sharing inflates the absolute values, and the ratios between methods run under the same sharing are the quantity used.

\textbf{Numerical Verification of the Propositions.} On every steered sequence, the sum of log-potentials along the returned particle's lineage is compared with $r(x)/\alpha$; the maximum absolute discrepancy over all sequences, methods and schedules is below $10^{-13}$ in double precision. The identity of Proposition~\ref{prop:prefix} at $\beta = 0$ is checked in its discrete form: the sampler with $\beta = 0$ reproduces best-of-N importance sampling bitwise, including the final draw.

\textbf{Temperature.} On music the temperature is set from base samples before any steered sequence exists, by the rule $\alpha = \Delta / \ln 10^{3}$, where $\Delta$ is the mean over conditions of the interdecile range of the reward over that condition's base samples and their continuations; this makes the maximum weight ratio across a particle set of order $10^{3}$ at the terminal corner and is the same for every method. On dance the sweep $\alpha \in \{0.2, 0.05, 0.02\}$ is reported in full (Table~\ref{tab:rba_full}); the implied maximum weight ratio across 32 particles at these values is $3$, about $80$ and about $10^{4}$.

\section{Lookahead: Measurements and Twist Details}
\label{app:lookahead}

A twist has three sources (Figure~\ref{fig:lookahead}(c)). A prefix-evaluable reward is an exact intermediate potential by Proposition~\ref{prop:prefix}, with no estimation and no extra denoiser evaluations; it raises terminal CLAP by $0.020$ at $N = 32$ over terminal importance sampling, monotonically in $\beta$. Rollouts, $M$ continuations per particle averaging $\exp(S_k/\alpha)$, are an unbiased twist that establishes the headroom, $+0.058$ at $N = 32$ with sixteen continuations on repetition, but charged at their cost they lose to additional particles for $M \le 4$ (Appendix~\ref{app:rollout}). A learned predictor is the cheapest twist, but its rollout-derived targets carry a Monte Carlo error of about 2 nats against a within-set spread of about 3, so they hold no within-set ranking to learn; one trained directly on within-set weights reaches a within-set correlation of $0.08$ to $0.13$ (Appendix~\ref{app:twistvariants}).

\subsection{Estimating within-set predictability}
\label{app:decomp}

\begin{figure}[h]
\centering
\includegraphics[width=\textwidth]{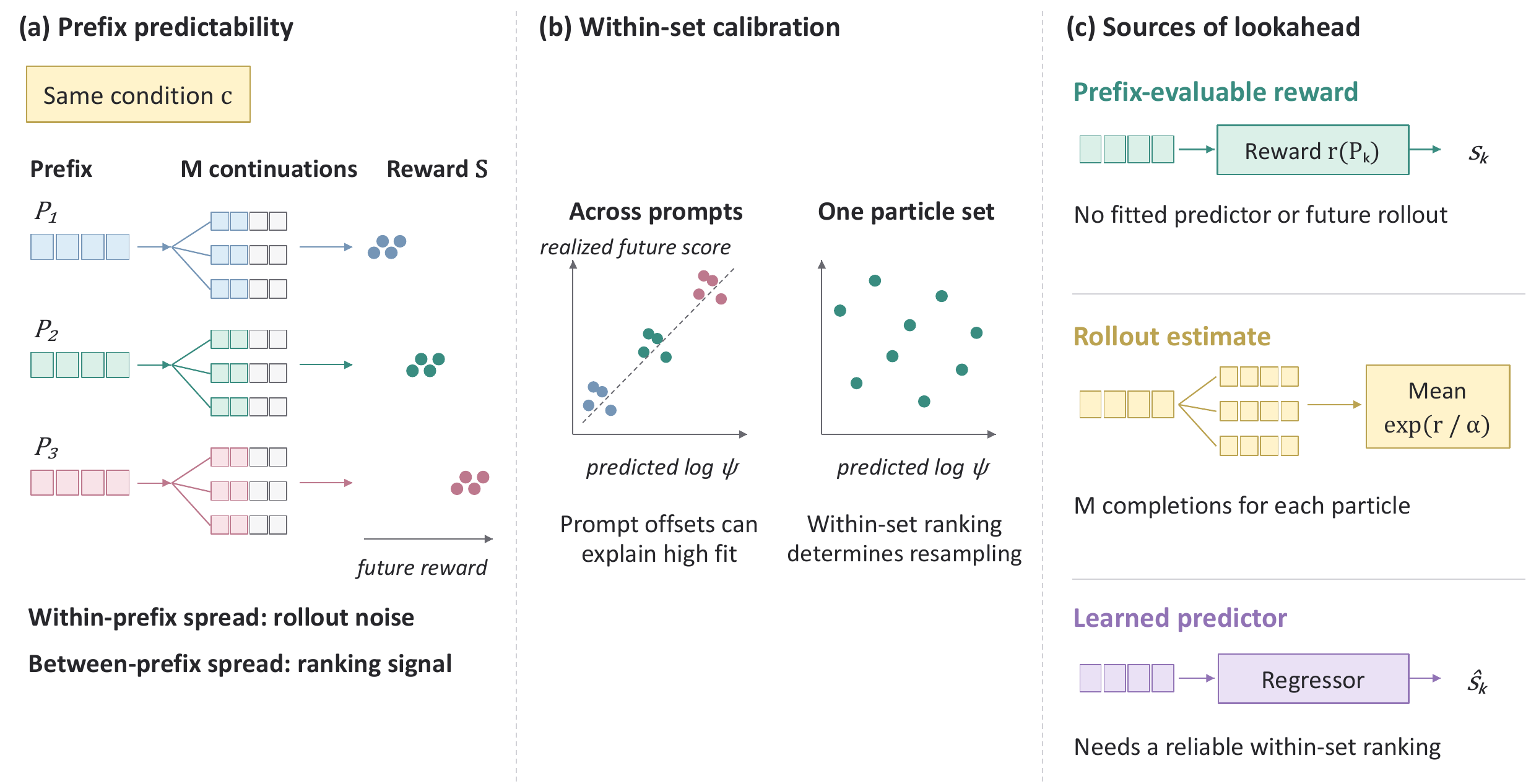}
\caption{\textbf{When lookahead pays (schematic).} (a) Within-set predictability: prefixes drawn under one condition are each continued $M$ times; the spread within a prefix is rollout noise, the spread between prefixes is the signal a twist can rank on (\Cref{eq:decomp}). (b) Twist accuracy is a within-set property: a regressor can fit realized future reward across prompts through prompt-level offsets while ranking the particles of one set at random (measured values in Table~\ref{tab:twist} and Appendices~\ref{app:withinset} and~\ref{app:twistvariants}). (c) Three sources of a content-axis twist: a prefix-evaluable reward (exact, free), rollouts (unbiased, charged), a learned predictor (cheap, validate within sets).}
\label{fig:lookahead}
\end{figure}

For a condition $c$ and chunk index $k$, prefixes $x^{(\le k)}_{1:P}$ are drawn from the base model and each is continued $M$ times to the end, giving future rewards $S_{p,m}$. With $\bar S_p$ the per-prefix mean and $v_p$ the per-prefix sample variance, the within-prefix term of \Cref{eq:decomp} is $\bar v = \frac{1}{P}\sum_p v_p$ and the between-prefix term is $\mathrm{Var}_p[\bar S_p] - \bar v / M$, the correction removing the Monte Carlo variance that the finite $M$ adds to every prefix mean. The within-set predictability is $\rho_k = (\mathrm{Var}_p[\bar S_p] - \bar v/M) / (\mathrm{Var}_p[\bar S_p] - \bar v/M + \bar v)$, clipped at zero, with a bootstrap standard error over prefixes. Pooling prefixes across conditions gives the between-condition version of the same quantity, which is what a cross-prompt regression fits; Table~\ref{tab:predictability} reports both.

\begin{table}[h]
\caption{Within-set predictability of future reward from base-model rollouts. Pooled values draw prefixes from many conditions; within-condition values hold the condition fixed. Beat alignment and repetition use the dance model; prompt adherence uses the music model. The repetition and prompt-adherence rows decompose the variance of $S_k$ (the conditional-mean diagnostic of \Cref{eq:decomp}); the beat-alignment row decomposes the variance of the exponential target $\exp(S_k/\alpha)$ itself and reports the ratio of between- to within-prefix variance, an upper bound on the share. The within-condition repetition share is bounded through the rollout-target spread of Appendix~\ref{app:twistvariants}.}
\label{tab:predictability}
\begin{center}
\small
\begin{tabular}{llccc}
\toprule
Reward & Prefix pool & $k=1$ & $k=2$ & $k=3$ \\
\midrule
Beat alignment (chunk-additive) & pooled, $M=8$ & $\le 0.13$ & $\le 0.13$ & $\le 0.14$ \\
Repetition (long range) & pooled, $M=16$ & $0.92$ & $0.94$ & $0.93$ \\
Prompt adherence (terminal) & within condition, $M=8$ & $0.47$ & $0.62$ & $0.66$ \\
\bottomrule
\end{tabular}
\end{center}
\end{table}

\subsection{Within-set accuracy of a twist}
\label{app:withinset}

For a particle set produced by the sampler at chunk $k$, the oracle twist $\log \psi^{\star}_k$ of every particle is computed from $M = 16$ continuations to the end. A candidate twist's within-set accuracy is the Pearson (and Spearman) correlation between its $\log \psi_k$ and $\log \psi^{\star}_k$ across the $N$ particles of the set, averaged over sets; its weight-level error is the standard deviation of $\log \psi_k - \log \psi^{\star}_k$ within a set after removing the set mean, since the set mean cancels in the normalized weights. The primary learned twist is a multilayer perceptron ($4{,}638 \to 512 \to 512 \to 1$, ReLU, inputs standardized) whose input is the last 30 frames of the prefix in the chunk's own root frame, the mean-pooled music features of each remaining chunk and a one-hot chunk index; it is trained on 4{,}000 prefixes (3{,}500 training, 500 validation) with rollout targets $\log \frac{1}{M}\sum_m \exp(S_{p,m}/\alpha)$ at $M = 8$ (Monte Carlo standard error about $0.5$ nats), by mean-squared error with Adam (learning rate $10^{-3}$, weight decay $10^{-4}$, batch 256, 300 epochs, the checkpoint with the lowest validation error kept), and evaluated on 500 fresh prefixes with fresh rollouts. Its residual column in Table~\ref{tab:twist} is the root-mean-square error against those targets across prefixes; within a set, after removing the set mean, the residual standard deviation is $6.0$ nats. Its residual on held-out prefixes decomposes into a between-sequence component (0.27 of its variance) and a within-sequence component (0.73), so most of its error is of the kind that does not cancel.

\subsection{Twist variants that rescale signal and noise together}
\label{app:twistvariants}

Tempering replaces $\log \psi_k$ by $\beta \log \psi_k$; it scales the twist's contribution to the log-weights, signal and error alike, by $\beta$, so the root-mean-square residual of $\beta\log\psi_k$ against the tempered target $\beta\log\psi^{\star}_k$ on fresh prefixes falls from $4.3$ nats at $\beta = 1$ to $2.1$, $1.1$ and $0.4$ at $\beta = 0.5$, $0.25$ and $0.1$, while the within-set correlation is unchanged and the within-set residual against the untempered oracle stays at $3.6$, $3.0$ and $3.0$ nats, the oracle's own within-set spread: tempering removes the twist's influence, not its error. The effect on the returned reward at $N = 32$ moves from $-0.053$ to $-0.017$, $+0.003$ and $-0.002$, the last two indistinguishable from $\psi \equiv 1$, and the fraction of early boundaries at which the sampler resamples falls from $1.00$ to $0.05$. An ensemble of five regressors agrees with itself to about 1 nat while remaining about 4 nats from the target, so the ensemble mean ($4.1$ nats) and a shrinkage by the ensemble variance fitted on validation prefixes (shrink factor $0.996$) reproduce the single regressor's effect ($-0.048$ and $-0.052$). A regressor trained directly on within-set targets, with a cross-entropy between the true and predicted normalized weights of each set plus a set-centred regression term, on 200 sets of 16 same-condition prefixes each continued 8 times, reaches a held-out within-set Spearman correlation of $0.08$ ($k=1$) and $0.13$ ($k=2$); its training targets carry a Monte Carlo error of about $2$ nats per particle against a within-set spread of $2.9$ nats, which limits what the regressors we trained could extract from prefix-level rollouts at this $M$.

\subsection{Rollout lookahead at matched cost}
\label{app:rollout}

A rollout twist with $M$ continuations to the end at chunks $k = 1$ and $2$ of $K = 4$ costs $(4 + 5M)/4$ times the denoiser evaluations of a plain run with the same $N$ (the continuations cover $3M + 2M$ chunks per particle on top of the four of the run itself), so it is compared against the $\psi \equiv 1$ sampler at $N' = N(4 + 5M)/4$ particles, with every continuation's evaluations counted. On the repetition reward the rollout twist trails at every pair: $-0.029$, $-0.058$ and $-0.071$ at $N = 4$ for $M = 1, 2, 4$, and $-0.009$, $-0.017$ and $-0.040$ at $N = 8$ (paired bootstrap intervals exclude zero except at $N = 8$, $M = 1$). The rollout estimate's Monte Carlo error at these $M$, measured by subsampling the stored continuations, is $5.5$, $3.7$ and $2.2$ nats against the $M = 8$ estimate, which is why the resampling it triggers at 67 to 98 percent of early boundaries removes diversity the extra particles would have kept. The oracle at $M = 16$, whose gain of $+0.058$ defines the headroom, spends $16 \times 100 \times (3 + 2) = 8{,}000$ denoiser evaluations per particle on its continuations, 20 times the $400$ of the sequence itself.

\subsection{Denoising-axis plug-in accuracy}
\label{app:plugin}

Along the denoising axis the plug-in prediction of the current chunk's own reward is evaluated against the realized reward of the same continuation at DDIM steps 45, 35, 25, 15 and 5 of 50 on 500 fresh prefixes. For the repetition reward its $R^2$ is $0.995$, $0.985$, $0.974$, $0.952$ and $0.858$; for beat alignment the plug-in explains $0.40$ of the variance at step 45 and none from step 35 on, its correlation with the realized reward falling to $0.11$ at step 5. This is the Jensen-gap ordering of Appendix~\ref{app:jensen}: chunk-local lookahead is accurate late in denoising and exact at the chunk's end, so within-chunk resampling events leave the chunk weight exact when the plug-in is free to score; what they change is the resampling times, whose finite-particle effect Section~\ref{sec:mainresult} measures.

\section{Experimental Details}
\label{app:experiments}

\subsection{Testbeds}
\label{app:testbeds}

\begin{figure}[h]
\centering
\includegraphics[width=\textwidth]{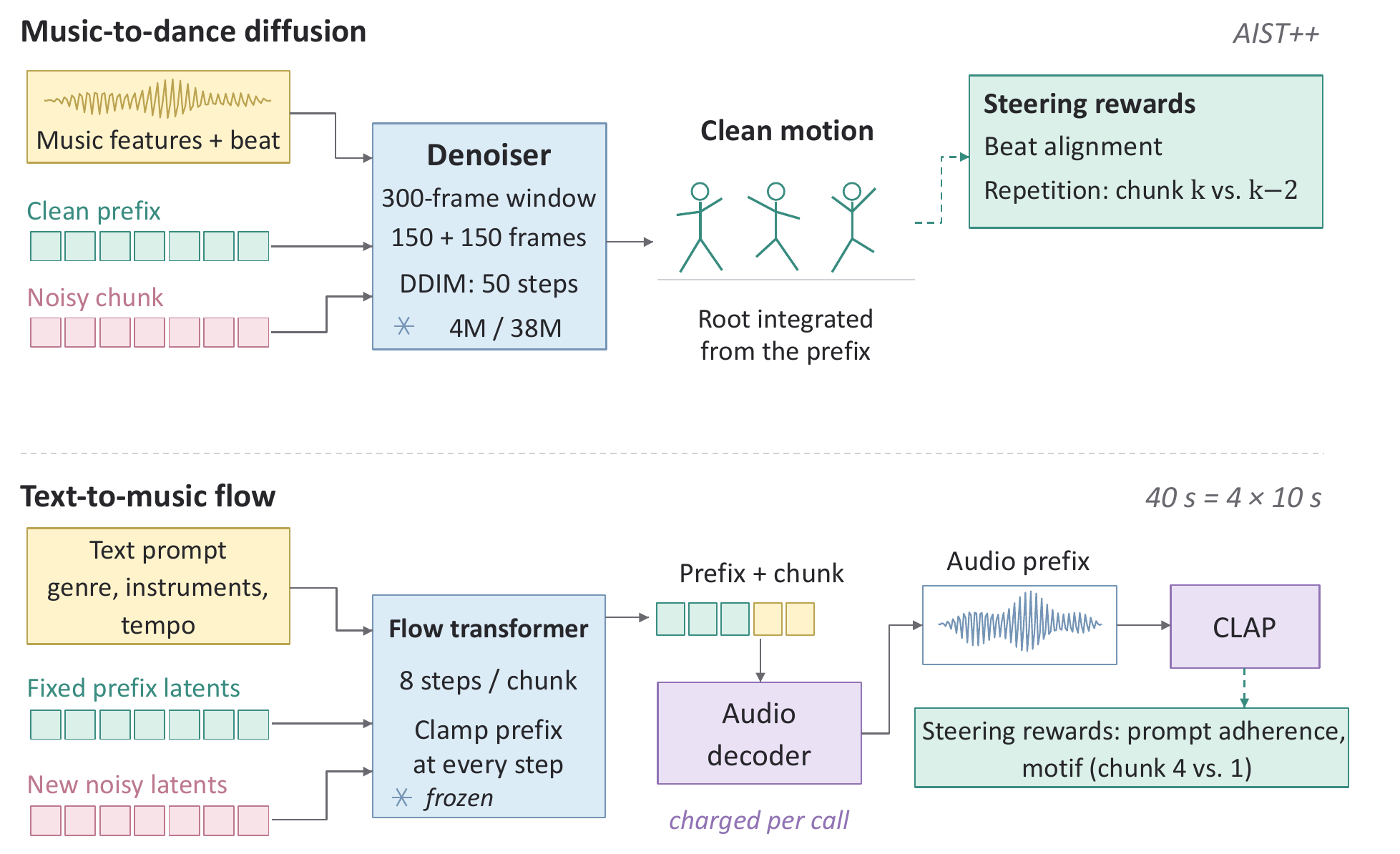}
\caption{\textbf{Two chunked generators.} Top: music-to-dance diffusion on AIST++. The denoiser sees a 300-frame window, the clean prefix chunk and the noisy chunk, and predicts the chunk's root as a displacement integrated from the prefix root, so the prefix is fixed and the seam continuous by construction; steering rewards are beat alignment (chunk-additive) and repetition of chunk $k$ against chunk $k-2$ (long range). Bottom: text-to-music flow, 40 s as four 10 s chunks. Each chunk is generated by inpainting past the clip's edge with the prefix latents replaced at every flow step; scoring a prefix requires one decode and one CLAP pass, which is charged; steering rewards are prompt adherence (terminal, evaluable on any prefix) and motif recurrence of the last chunk against the first.}
\label{fig:testbeds}
\end{figure}

\textbf{Chunked Dance Diffusion.} The base model is a transformer denoiser in the EDGE representation \citep{tseng2023edge} (151 channels: foot contacts, root translation and 6D joint rotations at 30 fps) trained on AIST++ \citep{li2021aistpp} with EDGE's per-frame losses. Chunking is obtained by training on pairs of consecutive 150-frame windows: the input is a 300-frame window whose first half is the clean prefix chunk and whose second half is the noised chunk, with a binary prefix-indicator channel, and the loss is computed on the second half, with the velocity and foot-sliding terms at the seam evaluated against the clean prefix frame. With probability $0.2$ the prefix is replaced by zeros to train the unconditional start of a sequence. The generated chunk predicts its root as a per-frame displacement in metres, normalized by its training standard deviation and integrated from the prefix root at sampling time, so root continuity at the seam holds by construction; the pose seam is checked by the ratio of the joint-position jump at the seam to the median within-chunk jump, $2.87$ for the 4M model and $2.27$ for the 38M model against $0.93$ for ground truth. The 4M model (latent 192, feed-forward 384, 4 layers, 4 heads, dropout 0.1, 4.06M parameters) is scheduled for 18{,}000 steps and the 38M model (latent 512, feed-forward 1024, 6 layers, 8 heads, 37.6M parameters) for 36{,}000, at batch 64 with Adan (learning rate $4 \times 10^{-4}$, weight decay 0.02, constant schedule), EDGE's cosine schedule with 1{,}000 diffusion steps and $x_0$ prediction, condition dropout 0.25, and EDGE's loss weights (simple 0.636, velocity 2.964, forward-kinematic 0.646, foot contact 10.942) applied to the chunk half of the window; the audio features are EDGE's 35-dimensional baseline features at 30 fps, whose beat channel the reward reads. Training uses 7{,}090 windows of 300 frames from the training sequences and 2{,}620 from held-out sequences for monitoring; each model adopts the checkpoint with the lowest held-out loss (step 12{,}000 and step 9{,}000). The full configuration files are in the supplementary material. Sampling is EDGE's DDIM schedule with 50 steps, $\eta = 1$ and guidance weight 2, so a chunk costs $2 \times 50$ denoiser rows per particle. Ninety-six music-and-motion sequences are held out at the sequence level; prompts are 20-second music segments from held-out sequences, two per music piece across all ten genres, 40 in total, with four seeds each.

\begin{figure}[h]
\centering
\includegraphics[width=\textwidth]{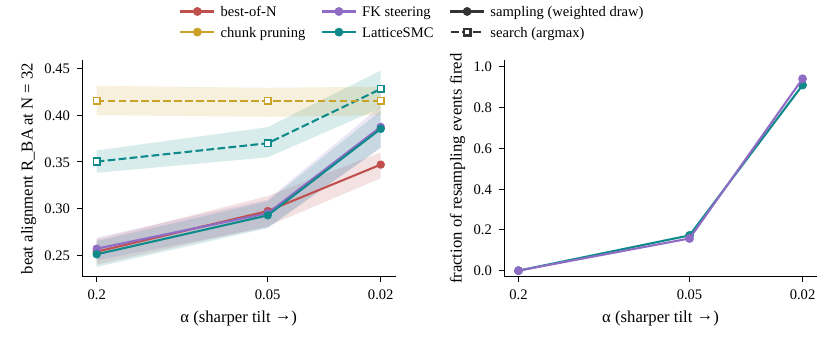}
\caption{\textbf{Temperature and effective sample size on beat alignment.} Left: reward at $N = 32$ for the proper methods and chunk pruning at $\alpha \in \{0.2, 0.05, 0.02\}$; the boundary and dense schedules of \method{} coincide at every temperature, and both approach pruning as the tilt sharpens. Right: the fraction of chunk boundaries at which the ESS falls below $N/2$ and resampling fires; at $\alpha = 0.2$ the maximum weight ratio across 32 particles is 3 and no method resamples, at $\alpha = 0.02$ it is of order $10^{4}$ and resampling fires at 91 to 98 percent of boundaries.}
\label{fig:temperature}
\end{figure}

\textbf{Chunked Text-to-Music Flow.} The base model is Stable Audio 3 Medium \citep{stabilityai2026sa3}, a 44.1 kHz stereo latent flow-matching transformer \citep{lipman2023flow} with an autoencoder (Hugging Face \texttt{stabilityai/stable-audio-3-medium}, revision \texttt{27b5a21}), run at its shipped settings (8 ping-pong steps, guidance 1.0, so one denoiser row per particle per step; denoiser in bf16, decoder in fp16 as shipped); the decoder is the checkpoint's own autoencoder, and the CLAP scorer is the LAION HTSAT-base music checkpoint \texttt{music\_audioset\_epoch\_15\_esc\_90.14} \citep{wu2023clap}. The chunked extension is this study's adaptation of inpainting-based extension: each new 10-second chunk is generated with the previous latents present, and at every flow step the latent region of the prefix is overwritten with the fixed prefix latents noised to that step, so the particle state is bit-exact across chunks. The decoder is stochastic and is given a seed fixed per (prompt, seed) pair, which makes waveforms reproducible across processes. The seam spectral-flux ratio of the raw decode is $1.26$ (a within-chunk control gives $1.21$), tempo adherence of base samples to the prompted tempo is $0.95$ within 5 percent, and CLAP prompt retrieval over 10 genre-diverse prompts is $0.65$ at rank 1 (chance $0.1$). Prompts are 40 templated descriptions (8 genres, instrumentation, an explicit tempo, no structural words), with two seeds each.

\subsection{Rewards and held-out metrics}
\label{app:rewards}

\textbf{Beat Alignment.} Kinematic beats are local minima of the smoothed mean joint speed of the generated motion \citep{siyao2022bailando}; the reward is the mean over music beats of $\exp(-d^2 / 2\sigma^2)$ with $d$ the distance in frames to the nearest kinematic beat and $\sigma = 3$ frames. Its chunk-additive form assigns each music beat to its chunk and evaluates it against the kinematic beats of the prefix through that chunk; the mean absolute gap between the additive and full forms is $0.01$ on every method.

\textbf{Repetition.} $r_k = \exp(-d_k / \tau)$ for $k \ge 3$ with $d_k$ the mean root-relative joint-position distance between chunk $k$ and chunk $k-2$ and $\tau = 0.171$ m, the median of $d$ over base samples; $r_1 = r_2 = 0$. The additive and full forms coincide exactly. Real dancers score $0.412$ on this reward and the base model $0.718$.

\textbf{Prompt Adherence.} The cosine similarity between the CLAP audio embedding of the clip and the CLAP text embedding of its prompt, with the LAION music checkpoint \citep{wu2023clap}; a 40-second clip is embedded as the renormalized mean of its four 10-second window embeddings. The prefix score $s_k$ is the same quantity on the decoded prefix through chunk $k$; the additive counterpart, the mean of the window scores, differs from the terminal value by under 2 percent.

\textbf{Motif Recurrence.} The cosine similarity between the CLAP audio embeddings of the fourth and first 10-second windows of the returned clip; the prefix score is the similarity of the third window to the first, available at the last boundary. On base samples $99.4$ percent of clips have a final window closer to their own opening than to another prompt's, so the reward moves within the model's own notion of a returning motif.

\textbf{Held-out Metrics.} On dance: EDGE's physical foot-contact score (PFC), a realism statistic (the 1-Wasserstein distance between the per-frame joint-acceleration distribution of a sample and that of held-out real motion), root travel per sequence, and pose diversity (mean pairwise root-relative joint distance across the seeds of a condition). On music: tempo adherence (fraction of chunks whose tracked tempo is within 5 percent of the prompted tempo, with a neural beat tracker \citep{foscarin2024beatthis}), the fraction of energy above 8 kHz, the four audiobox-aesthetics axes \citep{tjandra2025audiobox}, the seam spectral-flux ratio, and embedding diversity (mean pairwise CLAP cosine distance across seeds). None is used by any method.

\subsection{Cost accounting and compute}
\label{app:expimpl}

Denoiser rows, decoder calls and reward evaluations are counted per sequence and audited against their closed forms; per sequence at $N = 32$ the music methods make 32 (best-of-N), 128 (pruning and \method{} on prompt adherence, where every boundary is scored; 64 on motif recurrence, whose prefix score exists only at the last boundary) and 512 (dense schedule on prompt adherence) decoder and CLAP calls. Wall-clock is measured with three processes sharing a GPU, so the ratios between methods are the informative quantity. The main study, including the scale replicate and the lookahead measurements of Appendix~\ref{app:lookahead}, used about 250 GPU-hours on two 48 GB GPUs; the pruning sweep, the horizon and temperature runs and the music qualification of Appendices~\ref{app:prunesweep} and~\ref{app:horizon} used about 110 more, measured as summed job walls on GPUs shared with other jobs.

\subsection{Temperature, cost axes and crossover}
\label{app:temperature}

Figures~\ref{fig:temperature}, \ref{fig:cost} and~\ref{fig:crossover} give the temperature sweep on beat alignment, the three cost axes on the music model and the crossover against pruning by particle count that Sections~\ref{sec:returnrule} and~\ref{sec:costaxes} summarize.

\begin{figure}[h]
\centering
\includegraphics[width=\textwidth]{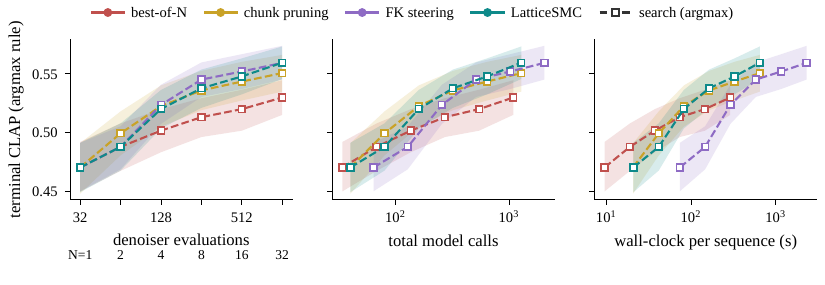}
\caption{\textbf{Three cost axes on the music model (prompt adherence).} The same runs plotted against denoiser evaluations, total model calls (denoiser, decoder and CLAP) and measured wall-clock per sequence. The two schedules of \method{} carry identical weights (the plug-in CLAP differences telescope to the prefix-score increments of Proposition~\ref{prop:prefix}) and coincide on the first axis; on a latent generator each within-chunk event costs a decode and a reward pass, so the dense schedule moves right by $3.6\times$ against the boundary schedule and its curve flattens from $N = 8$, while the boundary schedule keeps rising. Best-of-N is cheapest per particle and lowest in reward at every budget.}
\label{fig:cost}
\end{figure}

\begin{figure}[h]
\centering
\includegraphics[width=\textwidth]{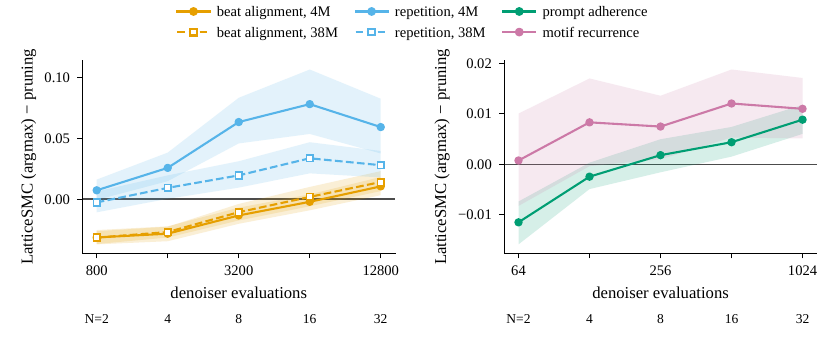}
\caption{\textbf{Proper resampling overtakes greedy pruning as the budget grows.} Paired difference between \method{} with argmax return and chunk pruning as a function of particle count, with 95\% bootstrap intervals over conditions, for the four rewards and, on the dance rewards, both model sizes. Pruning's early commitment leads at the smallest budgets on beat alignment and prompt adherence; ESS-gated resampling crosses it between $N = 16$ and $32$ on beat alignment, at $N = 8$ on prompt adherence and at $N = 4$ on motif recurrence, and leads from $N = 4$ on the long-range repetition reward, where pruning replicates blind for two chunks. The 38M model reproduces the 4M crossovers.}
\label{fig:crossover}
\end{figure}

\subsection{Pruning strength}
\label{app:prunesweep}

Chunk pruning keeps the top $M$ particles by prefix reward at every boundary and expands each to $N/M$ continuations; Table~\ref{tab:main} runs it at $M = N/4$, isolating the beam expansion and top-$k$ retention of Stream-T1 \citep{streamt12026} from its noise-initialization and memory components, which the common generator kernel replaces. Table~\ref{tab:prunesweep} sweeps $M \in \{1, N/4, N/2\}$ at $N = 32$ on all four rewards with the same conditions, seeds, temperatures and denoiser budget (12{,}800 rows per dance sequence; 1{,}024 rows and 128 (prompt adherence) or 64 (motif recurrence) decoder and CLAP calls per music sequence for every $M$). The held-out quantities of every row lie inside the base intervals (dance: PFC $[1.25, 2.72]$, $W_1$ $[0.0066, 0.0094]$; music: base tempo $0.903$, 8 kHz fraction $0.0069$, seam $1.27$, diversity $0.061$); the motif rows' held-out terminal CLAP is $0.465$ to $0.468$ against $0.470$ for the base. On repetition every prefix reward is zero at the first two boundaries, so top-$M$ selection there is a tie broken by index and $M = 1$ commits to an arbitrary lineage before any reward has arrived; the ESS gate leaves the set intact at those boundaries, which is the mechanism behind Section~\ref{sec:costaxes}.

\begin{table}[h]
\caption{Chunk pruning at $N = 32$ as a function of its strength $M$ (argmax by construction; $M = 1$ is the $\alpha \to 0$ limit of the boundary schedule of \method{} with argmax return when the maximum is unique), with 95\% bootstrap intervals over conditions, held-out quantities, and paired differences against the $M = N/4$ row and against the boundary schedule of \method{} with argmax return; an asterisk marks an interval excluding zero. Dance: 4M model, $\alpha = 0.02$, held-out PFC $\times 10^{4}$ and $W_1$. Music: held-out tempo adherence, 8 kHz energy fraction and seed diversity.}
\label{tab:prunesweep}
\begin{center}
\footnotesize
\setlength{\tabcolsep}{4pt}
\resizebox{\textwidth}{!}{\begin{tabular}{l c l l cc}
\toprule
Reward & $M$ & Reward [95\% CI] & Held out & $-$ ($M = N/4$) & $-$ boundary argmax \\
\midrule
Beat alignment & 1 & $\mathbf{0.473}\ [0.453, 0.491]$ & PFC 1.73, $W_1$ 0.0078 & $+0.058^{*}$ & $+0.045^{*}$ \\
 & 8 & $0.415\ [0.399, 0.431]$ & PFC 2.05, $W_1$ 0.0082 & & $-0.013^{*}$ \\
 & 16 & $0.386\ [0.372, 0.400]$ & PFC 1.80, $W_1$ 0.0076 & $-0.029^{*}$ & $-0.042^{*}$ \\
\midrule
Repetition & 1 & $0.850\ [0.755, 0.941]$ & PFC 1.62, $W_1$ 0.0081 & $-0.037^{*}$ & $-0.096^{*}$ \\
 & 8 & $0.887\ [0.796, 0.976]$ & PFC 1.36, $W_1$ 0.0077 & & $-0.059^{*}$ \\
 & 16 & $0.916\ [0.826, 1.013]$ & PFC 1.43, $W_1$ 0.0081 & $+0.028^{*}$ & $-0.031^{*}$ \\
\midrule
Prompt adherence & 1 & $\mathbf{0.564}\ [0.551, 0.577]$ & tempo 0.949, 8 kHz 0.0064, div.\ 0.049 & $+0.013^{*}$ & $+0.004^{*}$ \\
 & 8 & $0.551\ [0.535, 0.566]$ & tempo 0.894, 8 kHz 0.0067, div.\ 0.052 & & $-0.009^{*}$ \\
 & 16 & $0.543\ [0.526, 0.558]$ & tempo 0.934, 8 kHz 0.0061, div.\ 0.055 & $-0.008^{*}$ & $-0.016^{*}$ \\
\midrule
Motif recurrence & 1 & $0.909\ [0.899, 0.919]$ & tempo 0.918, 8 kHz 0.0073, div.\ 0.069 & $-0.009^{*}$ & $-0.020^{*}$ \\
 & 8 & $0.918\ [0.908, 0.927]$ & tempo 0.925, 8 kHz 0.0073, div.\ 0.063 & & $-0.011^{*}$ \\
 & 16 & $0.924\ [0.916, 0.931]$ & tempo 0.922, 8 kHz 0.0077, div.\ 0.051 & $+0.005^{*}$ & $-0.006^{*}$ \\
\bottomrule
\end{tabular}}
\end{center}
\end{table}

\subsection{Horizon and per-chunk tilt}
\label{app:horizon}

\textbf{Dance.} The paper's 40 prompts are 20-second segments of 40 held-out AIST++ sequences whose lengths are 6 to 9 chunks of 5 s. $K = 6$ (30 s) therefore exists for every condition, with each segment containing the paper's 20 s segment, whereas a 40 s segment exists for only 16 of the 40 source sequences, so $K = 8$ is run on those 16 conditions (8 of the 20 music pieces) with four seeds and its intervals are over 16 conditions, not pooled with the 40. Every method runs at $\alpha = 0.02$, the paper's value at $K = 4$, and the two \method{} schedules are additionally run at $\alpha = 0.01$ on beat alignment, a second point fixed before the runs; the denoiser budget is $100KN$ rows per sequence, audited. Tables~\ref{tab:k6} and~\ref{tab:k8} give the $N = 8$ and $N = 32$ rows with the fraction of the $K-1$ boundaries at which $\mathrm{ESS} < N/2$ triggered resampling, the range of the mean $\mathrm{ESS}/N$ before the boundaries, and the held-out quantities against the base model's bootstrap interval on the same conditions. On the 16 conditions of $K = 8$, the paper's $K = 4$ grids give at $N = 32$: beat alignment, boundary argmax $0.464$ against pruning $0.431$ ($+0.033\ [+0.023, +0.043]$) and dense argmax $0.483$; repetition (two pairs at $K = 4$), boundary argmax $0.888$ against pruning $0.803$ ($+0.086\ [+0.040, +0.135]$). Repetition at $K = 6$ and $8$ is scored as the sum the methods steered on, over four and six pairs; per pair the boundary schedule's lead over pruning is $0.030$, $0.025$ and $0.020$ at $K = 4$, $6$ and $8$.

The reading in Section~\ref{sec:costaxes} rests on the resampling rate and on how each reward is normalized. Beat alignment is a mean over all beats of the sequence (range $[0,1]$ at every $K$), so its chunk term $r_k$ is chunk $k$'s share of that mean and the chunk potential $\exp(r_k/\alpha)$ weakens as $1/K$ while the terminal potential $\exp(r/\alpha)$ keeps its scale; repetition is a sum over the $K-2$ chunk pairs, so its per-pair potential keeps its scale at every $K$ while the terminal range grows as $K-2$, which is why its ordering holds at every horizon without changing $\alpha$. On beat alignment at a fixed $\alpha$ a longer chain therefore carries a weaker per-boundary tilt, the mean $\mathrm{ESS}/N$ before a boundary rises from the $0.18$--$0.25$ of the paper's $K = 4$ regime to $0.41$--$0.50$ at $K = 6$ and $0.46$--$0.58$ at $K = 8$, and the ESS gate fires less often, while top-$M$ pruning commits at every boundary independently of $\alpha$. At $\alpha = 0.01$ the mean $\mathrm{ESS}/N$ returns to $0.18$--$0.25$, the gate fires at 99 and 98 percent of boundaries, and the boundary schedule leads pruning under the argmax rule at both horizons, while under the draw rule both schedules reach pruning's level ($+0.008$ and $+0.006$ for the boundary schedule, intervals containing zero) after trailing it by $0.07$ to $0.11$ at $\alpha = 0.02$. A rule that keeps the per-boundary tilt fixed, $\alpha \propto 1/K$ for a chunk-additive reward, is the principled default; it was not applied in these runs, which keep $\alpha = 0.02$ at every $K$ with $0.01$ as the pre-registered second point.

\begin{table}[h]
\caption{Horizon $K = 6$ (30 s, 40 conditions, 4M dance model): full reward of the returned sequence with 95\% bootstrap intervals, fraction of boundaries at which resampling fired, range of the mean $\mathrm{ESS}/N$ before the boundaries, and held-out quantities (PFC $\times 10^{4}$, $W_1$, root-relative pose diversity). Base intervals on these conditions: PFC $[1.05, 2.51]$, $W_1$ $[0.0065, 0.0094]$. Paired differences at $N = 32$, beat alignment: boundary argmax $-$ pruning $-0.017\ [-0.024, -0.010]$; dense argmax $-$ pruning $-0.010\ [-0.016, -0.003]$; boundary argmax, $\alpha = 0.01$, $-$ pruning $+0.026\ [+0.019, +0.033]$; dense argmax, $\alpha = 0.01$, $-$ pruning $+0.031\ [+0.021, +0.042]$; boundary argmax, $\alpha = 0.01$, $-$ best-of-N argmax $+0.103\ [+0.092, +0.113]$. Repetition (sum over the four chunk pairs; rows give the mean of the returned sequence's sum): boundary argmax $-$ pruning $+0.100\ [+0.070, +0.135]$; boundary argmax $-$ best-of-N argmax $+0.135\ [+0.099, +0.164]$; dense argmax $-$ boundary argmax $-0.001\ [-0.046, +0.041]$.}
\label{tab:k6}
\begin{center}
\footnotesize
\setlength{\tabcolsep}{4pt}
\begin{tabular}{l c l c c c c c}
\toprule
Method & $N$ & Reward [95\% CI] & fired & $\mathrm{ESS}/N$ & PFC & $W_1$ & diversity \\
\midrule
\multicolumn{8}{l}{\emph{Beat alignment}, $\alpha = 0.02$} \\
Base & 1 & $0.236\ [0.228, 0.243]$ &  &  & 1.70 & 0.0078 & 0.178 \\
Best-of-N (draw) & 8 & $0.292\ [0.283, 0.301]$ &  &  & 1.58 & 0.0075 & 0.180 \\
Best-of-N (argmax) & 8 & $0.296\ [0.288, 0.304]$ &  &  & 1.75 & 0.0079 & 0.178 \\
Chunk pruning & 8 & $0.376\ [0.365, 0.387]$ & 1.00 & 0.25 & 1.90 & 0.0079 & 0.181 \\
\method{}, boundary (draw) & 8 & $0.310\ [0.298, 0.323]$ & 0.35 & 0.53--0.62 & 1.69 & 0.0079 & 0.185 \\
\method{}, boundary (argmax) & 8 & $0.331\ [0.319, 0.343]$ & 0.35 & 0.53--0.62 & 1.75 & 0.0078 & 0.177 \\
\method{}, dense (draw) & 8 & $0.321\ [0.308, 0.334]$ &  &  & 1.91 & 0.0077 & 0.185 \\
\method{}, dense (argmax) & 8 & $0.347\ [0.333, 0.360]$ &  &  & 2.03 & 0.0079 & 0.182 \\
Best-of-N (draw) & 32 & $0.323\ [0.311, 0.335]$ &  &  & 1.82 & 0.0078 & 0.179 \\
Best-of-N (argmax) & 32 & $0.333\ [0.322, 0.344]$ &  &  & 1.69 & 0.0078 & 0.179 \\
Chunk pruning & 32 & $0.410\ [0.396, 0.424]$ & 1.00 & 0.25 & 2.13 & 0.0086 & 0.176 \\
\method{}, boundary (draw) & 32 & $0.341\ [0.324, 0.357]$ & 0.64 & 0.41--0.50 & 1.74 & 0.0076 & 0.179 \\
\method{}, boundary (argmax) & 32 & $0.393\ [0.373, 0.412]$ & 0.64 & 0.41--0.50 & 1.91 & 0.0079 & 0.180 \\
\method{}, dense (draw) & 32 & $0.344\ [0.328, 0.359]$ &  &  & 1.85 & 0.0084 & 0.177 \\
\method{}, dense (argmax) & 32 & $0.400\ [0.382, 0.417]$ &  &  & 1.85 & 0.0085 & 0.176 \\
\multicolumn{8}{l}{\emph{Beat alignment}, $\alpha = 0.01$} \\
\method{}, boundary (draw) & 32 & $0.417\ [0.398, 0.437]$ & 0.99 & 0.18--0.21 & 2.00 & 0.0086 & 0.180 \\
\method{}, boundary (argmax) & 32 & $0.436\ [0.417, 0.454]$ & 0.99 & 0.18--0.21 & 1.79 & 0.0082 & 0.181 \\
\method{}, dense (draw) & 32 & $0.420\ [0.399, 0.443]$ &  &  & 2.06 & 0.0083 & 0.174 \\
\method{}, dense (argmax) & 32 & $0.441\ [0.420, 0.462]$ &  &  & 2.11 & 0.0087 & 0.174 \\
\midrule
\multicolumn{8}{l}{\emph{Repetition}, $\alpha = 0.02$ (sum over the $K-2$ chunk pairs)} \\
Base & 1 & $1.489$ &  &  & 1.70 & 0.0078 & 0.178 \\
Best-of-N (draw) & 32 & $1.793$ &  &  & 1.29 & 0.0075 & 0.142 \\
Best-of-N (argmax) & 32 & $1.795$ &  &  & 1.31 & 0.0074 & 0.143 \\
Chunk pruning & 32 & $1.830$ & 1.00 & 0.25 & 1.32 & 0.0079 & 0.160 \\
\method{}, boundary (draw) & 32 & $1.909$ & 0.59 & 0.14--1.00 & 1.13 & 0.0077 & 0.141 \\
\method{}, boundary (argmax) & 32 & $1.930$ & 0.59 & 0.14--1.00 & 1.11 & 0.0077 & 0.140 \\
\method{}, dense (draw) & 32 & $1.878$ &  &  & 1.70 & 0.0086 & 0.161 \\
\method{}, dense (argmax) & 32 & $1.930$ &  &  & 1.79 & 0.0088 & 0.159 \\
\bottomrule
\end{tabular}
\end{center}
\end{table}

\begin{table}[h]
\caption{Horizon $K = 8$ (40 s, the 16 conditions with an 8-chunk recording, 4M dance model), as Table~\ref{tab:k6}. Base intervals on these conditions: PFC $[0.76, 2.66]$, $W_1$ $[0.0046, 0.0092]$. Paired differences at $N = 32$, beat alignment: boundary argmax $-$ pruning $-0.052\ [-0.057, -0.046]$; dense argmax $-$ pruning $-0.038\ [-0.045, -0.029]$; boundary argmax, $\alpha = 0.01$, $-$ pruning $+0.021\ [+0.012, +0.031]$; dense argmax, $\alpha = 0.01$, $-$ pruning $+0.016\ [-0.002, +0.035]$; boundary argmax, $\alpha = 0.01$, $-$ best-of-N argmax $+0.120\ [+0.108, +0.134]$. Repetition (sum over the six chunk pairs; rows give the mean of the returned sequence's sum): boundary argmax $-$ pruning $+0.121\ [+0.060, +0.190]$; boundary argmax $-$ best-of-N argmax $+0.133\ [-0.118, +0.317]$; dense argmax $-$ boundary argmax $+0.035\ [-0.053, +0.106]$.}
\label{tab:k8}
\begin{center}
\footnotesize
\setlength{\tabcolsep}{4pt}
\begin{tabular}{l c l c c c c c}
\toprule
Method & $N$ & Reward [95\% CI] & fired & $\mathrm{ESS}/N$ & PFC & $W_1$ & diversity \\
\midrule
\multicolumn{8}{l}{\emph{Beat alignment}, $\alpha = 0.02$} \\
Base & 1 & $0.231\ [0.216, 0.245]$ &  &  & 1.62 & 0.0067 & 0.191 \\
Best-of-N (draw) & 8 & $0.290\ [0.272, 0.305]$ &  &  & 2.27 & 0.0088 & 0.214 \\
Best-of-N (argmax) & 8 & $0.296\ [0.277, 0.312]$ &  &  & 1.68 & 0.0070 & 0.197 \\
Chunk pruning & 8 & $0.392\ [0.370, 0.412]$ & 1.00 & 0.25 & 2.14 & 0.0084 & 0.213 \\
\method{}, boundary (draw) & 8 & $0.316\ [0.291, 0.336]$ & 0.33 & 0.50--0.65 & 2.54 & 0.0076 & 0.223 \\
\method{}, boundary (argmax) & 8 & $0.342\ [0.318, 0.360]$ & 0.33 & 0.50--0.65 & 2.06 & 0.0072 & 0.200 \\
\method{}, dense (draw) & 8 & $0.310\ [0.290, 0.330]$ &  &  & 1.63 & 0.0068 & 0.215 \\
\method{}, dense (argmax) & 8 & $0.343\ [0.321, 0.362]$ &  &  & 2.09 & 0.0078 & 0.216 \\
Best-of-N (draw) & 32 & $0.323\ [0.301, 0.342]$ &  &  & 2.32 & 0.0079 & 0.212 \\
Best-of-N (argmax) & 32 & $0.335\ [0.315, 0.351]$ &  &  & 2.23 & 0.0082 & 0.211 \\
Chunk pruning & 32 & $0.434\ [0.407, 0.455]$ & 1.00 & 0.25 & 3.02 & 0.0102 & 0.203 \\
\method{}, boundary (draw) & 32 & $0.328\ [0.302, 0.351]$ & 0.50 & 0.46--0.58 & 1.95 & 0.0077 & 0.201 \\
\method{}, boundary (argmax) & 32 & $0.382\ [0.354, 0.405]$ & 0.50 & 0.46--0.58 & 3.14 & 0.0091 & 0.210 \\
\method{}, dense (draw) & 32 & $0.332\ [0.309, 0.350]$ &  &  & 2.33 & 0.0093 & 0.202 \\
\method{}, dense (argmax) & 32 & $0.396\ [0.371, 0.417]$ &  &  & 2.92 & 0.0095 & 0.204 \\
\multicolumn{8}{l}{\emph{Beat alignment}, $\alpha = 0.01$} \\
\method{}, boundary (draw) & 32 & $0.440\ [0.406, 0.466]$ & 0.98 & 0.20--0.25 & 1.95 & 0.0065 & 0.202 \\
\method{}, boundary (argmax) & 32 & $0.455\ [0.429, 0.478]$ & 0.98 & 0.20--0.25 & 1.82 & 0.0069 & 0.203 \\
\method{}, dense (draw) & 32 & $0.427\ [0.388, 0.458]$ &  &  & 2.16 & 0.0088 & 0.202 \\
\method{}, dense (argmax) & 32 & $0.450\ [0.410, 0.485]$ &  &  & 2.44 & 0.0091 & 0.202 \\
\midrule
\multicolumn{8}{l}{\emph{Repetition}, $\alpha = 0.02$ (sum over the $K-2$ chunk pairs)} \\
Base & 1 & $2.077$ &  &  & 1.62 & 0.0067 & 0.191 \\
Best-of-N (draw) & 32 & $2.544$ &  &  & 1.80 & 0.0086 & 0.175 \\
Best-of-N (argmax) & 32 & $2.544$ &  &  & 1.80 & 0.0086 & 0.175 \\
Chunk pruning & 32 & $2.555$ & 1.00 & 0.25 & 1.36 & 0.0079 & 0.171 \\
\method{}, boundary (draw) & 32 & $2.650$ & 0.70 & 0.13--1.00 & 1.28 & 0.0080 & 0.158 \\
\method{}, boundary (argmax) & 32 & $2.676$ & 0.70 & 0.13--1.00 & 1.24 & 0.0080 & 0.158 \\
\method{}, dense (draw) & 32 & $2.652$ &  &  & 2.57 & 0.0097 & 0.172 \\
\method{}, dense (argmax) & 32 & $2.711$ &  &  & 2.75 & 0.0096 & 0.171 \\
\bottomrule
\end{tabular}
\end{center}
\end{table}

\textbf{Music.} The music generator's maximum clip length allows $K = 8$ (80 s) only with a rolling 40 s context, in which chunk $k \ge 5$ is generated with the previous three chunks as clamped prefix latents and the prefix score is evaluated on the decoded rolling window. Before any steered run, 20 base clips at $K = 8$ (10 prompts, 2 seeds) were compared with the paper's $K = 4$ base clips of the same prompts and seeds under a gate fixed in advance: a relative change of at most 10 percent in seam continuity and in tempo adherence. The seams passed (spectral-flux ratio $1.237$ against $1.261$), tempo adherence did not (fraction of clips within 5 percent of the prompted tempo $0.85$ against $0.95$, $-10.5$ percent; per-chunk tempo reward $-12.9$ percent), so the rolling-context generator is not the same base model over 80 s and the $K = 8$ music grid was not run. The horizon results therefore rest on dance alone.

\clearpage
\subsection{Full results by budget, scale and held-out quantities}
\label{app:fulltables}
\begin{table}[h]
\caption{Held-out quantities at $N = 32$, never used by any method, as ranges over all steered methods against the base; dance values for beat-alignment steering, music values for prompt-adherence steering. PFC $\times 10^{4}$ (lower is better; base 95\% interval $[1.25, 2.72]$); $W_1$ realism statistic (base interval $[0.0066, 0.0094]$); PQ is the production-quality aesthetics axis; diversity is across seeds.}
\label{tab:heldout}
\begin{center}
\footnotesize
\setlength{\tabcolsep}{4pt}
\resizebox{\textwidth}{!}{\begin{tabular}{l ccc ccccc}
\toprule
& \multicolumn{3}{c}{Dance} & \multicolumn{5}{c}{Music} \\
& PFC & $W_1$ & diversity & tempo & 8 kHz & PQ & seam & diversity \\
\midrule
Base & 1.97 & 0.0080 & 0.183 & 0.903 & 0.0069 & 8.17 & 1.27 & 0.061 \\
Steered, all methods & 1.84--2.24 & 0.0077--0.0083 & 0.174--0.184 & 0.894--0.944 & 0.0061--0.0067 & 8.12--8.16 & 1.25--1.34 & 0.048--0.060 \\
\method{}, argmax & 2.24 & 0.0081 & 0.183 & 0.924 & 0.0062 & 8.13 & 1.34 & 0.050 \\
\bottomrule
\end{tabular}}
\end{center}
\end{table}

\begin{table}[h]
\caption{Paired 95\% bootstrap intervals over the 40 conditions for the differences reported in Table~\ref{tab:main} and Section~\ref{sec:returnrule} at $N = 32$ (4M dance model unless stated). A dash marks a pair for which no paired interval is stored.}
\label{tab:intervals}
\begin{center}
\footnotesize
\resizebox{\textwidth}{!}{\begin{tabular}{l cccc}
\toprule
Difference & Beat alignment & Repetition & Prompt adherence & Motif recurrence \\
\midrule
boundary $-$ best-of-N, draw & $+0.038\ [+0.028, +0.048]$ & $+0.028\ [+0.019, +0.038]$ & $+0.020\ [+0.016, +0.024]$ & $-0.002\ [-0.009, +0.006]$ \\
boundary $-$ chunk pruning, argmax & $+0.013\ [+0.005, +0.021]$ & $+0.059\ [+0.039, +0.083]$ & $+0.009\ [+0.006, +0.012]$ & $+0.011\ [+0.005, +0.017]$ \\
boundary $-$ chunk pruning, draw & -- & $+0.036\ [+0.014, +0.060]$ & -- & -- \\
dense $-$ chunk pruning, argmax & $+0.026\ [+0.015, +0.039]$ & $+0.075\ [+0.052, +0.101]$ & -- & -- \\
dense $-$ boundary, argmax & $+0.013\ [+0.003, +0.025]$ & $+0.016\ [-0.008, +0.034]$ & $-0.000\ [-0.004, +0.005]$ & $+0.003\ [-0.003, +0.008]$ \\
dense $-$ boundary, draw & -- & -- & $-0.002\ [-0.007, +0.003]$ & $+0.006\ [-0.003, +0.015]$ \\
dense $-$ chunk pruning, argmax, 38M & $+0.032\ [+0.018, +0.047]$ & $+0.078\ [+0.062, +0.096]$ & -- & -- \\
dense $-$ boundary, argmax, 38M & $+0.017\ [+0.005, +0.030]$ & $+0.051\ [+0.041, +0.062]$ & -- & -- \\
argmax $-$ draw, boundary & $+0.042\ [+0.036, +0.049]$ & $+0.023\ [+0.019, +0.027]$ & $+0.012\ [+0.010, +0.015]$ & $+0.026\ [+0.022, +0.032]$ \\
\bottomrule
\end{tabular}}
\end{center}
\end{table}

\begin{table}[h]
\caption{Beat alignment (4M dance model), full reward of the returned sequence by particle count, draw rule unless marked; $\alpha = 0.02$. Argmax rows are regenerated at $N = 8$ and $32$ (dense schedule: $N = 32$); a dash marks a budget at which the argmax sequences were not regenerated, or the base model's single sample.}
\label{tab:rba_full}
\begin{center}
\small
\begin{tabular}{l cccccc}
\toprule
$N$ & 1 & 2 & 4 & 8 & 16 & 32 \\
\midrule
Base & 0.234 & -- & -- & -- & -- & -- \\
Best-of-N (argmax) & 0.234 & 0.260 & 0.289 & 0.312 & 0.334 & 0.354 \\
Best-of-N (draw) & 0.234 & 0.260 & 0.288 & 0.309 & 0.329 & 0.347 \\
Chunk pruning & 0.234 & 0.289 & 0.342 & 0.371 & 0.398 & 0.415 \\
\method{}, dense (FK steering), draw & 0.234 & 0.256 & 0.316 & 0.350 & 0.370 & 0.387 \\
\method{}, dense (FK steering), argmax & 0.234 & -- & -- & -- & -- & 0.441 \\
\method{}, boundary (draw) & 0.234 & 0.258 & 0.305 & 0.332 & 0.366 & 0.385 \\
\method{}, boundary (argmax) & 0.234 & -- & -- & 0.356 & -- & 0.428 \\
\midrule
\method{} (draw), $\alpha = 0.05$ & 0.234 & 0.253 & 0.273 & 0.281 & 0.294 & 0.293 \\
\method{} (draw), $\alpha = 0.2$ & 0.234 & 0.236 & 0.254 & 0.247 & 0.254 & 0.251 \\
\bottomrule
\end{tabular}
\end{center}
\end{table}

\begin{table}[h]
\caption{Repetition (4M dance model), $\alpha = 0.02$; the additive and full forms coincide. A dash marks a budget at which the argmax sequences were not regenerated.}
\label{tab:rrep_full}
\begin{center}
\small
\begin{tabular}{l cccccc}
\toprule
$N$ & 1 & 2 & 4 & 8 & 16 & 32 \\
\midrule
Base & 0.718 & -- & -- & -- & -- & -- \\
Best-of-N (argmax) & 0.718 & 0.768 & 0.805 & 0.850 & 0.875 & 0.901 \\
Best-of-N (draw) & 0.718 & 0.768 & 0.804 & 0.849 & 0.871 & 0.895 \\
Chunk pruning & 0.718 & 0.761 & 0.792 & 0.811 & 0.835 & 0.887 \\
\method{}, dense (FK steering), draw & 0.718 & 0.768 & 0.836 & 0.876 & 0.901 & 0.921 \\
\method{}, dense (FK steering), argmax & 0.718 & -- & -- & 0.898 & -- & 0.962 \\
\method{}, boundary (draw) & 0.718 & 0.768 & 0.812 & 0.864 & 0.892 & 0.924 \\
\method{}, boundary (argmax) & 0.718 & -- & -- & 0.874 & -- & 0.947 \\
\bottomrule
\end{tabular}
\end{center}
\end{table}

\begin{table}[h]
\caption{Prompt adherence (terminal CLAP, music model), $\alpha = 0.0068$. \method{} uses the prefix-score potential; $\beta$ tempers the increments (Proposition~\ref{prop:prefix}). A dash marks a budget at which the argmax sequences were not regenerated.}
\label{tab:clap_full}
\begin{center}
\small
\begin{tabular}{l cccccc}
\toprule
$N$ & 1 & 2 & 4 & 8 & 16 & 32 \\
\midrule
Base & 0.470 & -- & -- & -- & -- & -- \\
Best-of-N (argmax) & 0.470 & 0.488 & 0.502 & 0.513 & 0.520 & 0.530 \\
Best-of-N (draw) & 0.470 & 0.488 & 0.502 & 0.513 & 0.519 & 0.527 \\
Chunk pruning & 0.470 & 0.499 & 0.523 & 0.536 & 0.543 & 0.551 \\
\method{}, dense (FK steering), draw & 0.470 & 0.488 & 0.520 & 0.538 & 0.542 & 0.546 \\
\method{}, dense (FK steering), argmax & 0.470 & -- & -- & 0.545 & -- & 0.559 \\
\method{} $\beta = 1$ (draw) & 0.470 & 0.488 & 0.518 & 0.534 & 0.538 & 0.547 \\
\method{} $\beta = 1$ (argmax) & 0.470 & -- & -- & 0.538 & -- & 0.560 \\
\method{} $\beta = 0.5$ (draw) & 0.470 & 0.488 & 0.513 & 0.526 & 0.533 & 0.544 \\
\method{} $\beta = 0.25$ (draw) & 0.470 & 0.488 & 0.507 & 0.518 & 0.525 & 0.533 \\
\bottomrule
\end{tabular}
\end{center}
\end{table}

\begin{table}[h]
\caption{Motif recurrence (music model), $\alpha = 0.019$; the prefix score is available at the last boundary.}
\label{tab:motif_full}
\begin{center}
\small
\begin{tabular}{l cccccc}
\toprule
$N$ & 1 & 2 & 4 & 8 & 16 & 32 \\
\midrule
Base & 0.814 & -- & -- & -- & -- & -- \\
Best-of-N (argmax) & 0.814 & 0.849 & 0.876 & 0.897 & 0.910 & 0.918 \\
Best-of-N (draw) & 0.814 & 0.849 & 0.874 & 0.892 & 0.901 & 0.904 \\
Chunk pruning & 0.814 & 0.848 & 0.870 & 0.894 & 0.906 & 0.918 \\
\method{}, dense (FK steering), draw & 0.814 & 0.849 & 0.875 & 0.895 & 0.898 & 0.908 \\
\method{}, dense (FK steering), argmax & 0.814 & 0.849 & 0.885 & 0.910 & 0.922 & 0.932 \\
\method{} $\beta = 1$ (draw) & 0.814 & 0.849 & 0.872 & 0.891 & 0.900 & 0.903 \\
\method{} $\beta = 1$ (argmax) & 0.814 & 0.849 & 0.878 & 0.902 & 0.918 & 0.929 \\
\bottomrule
\end{tabular}
\end{center}
\end{table}

\begin{table}[h]
\caption{Scale: reward at $N = 32$ on the 4M and 38M dance models, $\alpha = 0.02$, both return rules, with the paired difference of \method{} (argmax) against chunk pruning. Orderings at $N = 32$ on beat alignment are identical at both sizes.}
\label{tab:scale}
\begin{center}
\small
\setlength{\tabcolsep}{4pt}
\resizebox{\textwidth}{!}{\begin{tabular}{l cccc cccc}
\toprule
& \multicolumn{4}{c}{Beat alignment} & \multicolumn{4}{c}{Repetition} \\
& \multicolumn{2}{c}{4M} & \multicolumn{2}{c}{38M} & \multicolumn{2}{c}{4M} & \multicolumn{2}{c}{38M} \\
& draw & argmax & draw & argmax & draw & argmax & draw & argmax \\
\midrule
Base ($N=1$) & \multicolumn{2}{c}{0.234} & \multicolumn{2}{c}{0.225} & \multicolumn{2}{c}{0.718} & \multicolumn{2}{c}{0.654} \\
Best-of-N & 0.347 & 0.354 & 0.353 & 0.360 & 0.895 & 0.901 & 0.780 & 0.784 \\
Chunk pruning & -- & 0.415 & -- & 0.423 & -- & 0.887 & -- & 0.795 \\
\method{}, dense (FK steering) & 0.387 & 0.441 & 0.406 & 0.455 & 0.921 & 0.962 & 0.822 & 0.874 \\
\method{}, boundary & 0.385 & 0.428 & 0.394 & 0.438 & 0.924 & 0.947 & 0.798 & 0.823 \\
\midrule
\method{} (argmax) $-$ pruning & -- & +0.013 & -- & +0.015 & -- & +0.059 & -- & +0.028 \\
\bottomrule
\end{tabular}}
\end{center}
\end{table}

\clearpage
\section{Human Evaluation Protocol}
\label{app:human}

\begin{table}[h]
\caption{Human evaluation at $N = 32$: percentage of pair judgements preferring \method{} (prescribed schedule, argmax) over each comparator, with 95\% bootstrap intervals over the 40 conditions of each testbed; ties counted as one half. Each cell contains 40 distinct pairs and 200 judgements, five valid ratings per pair.}
\label{tab:human}
\begin{center}
\footnotesize
\begin{tabular}{l cc cc}
\toprule
& \multicolumn{2}{c}{Dance (beat alignment)} & \multicolumn{2}{c}{Music (prompt adherence)} \\
\method{} preferred over & follows beat & overall & matches prompt & overall \\
\midrule
Base model & 73.5 [65.7, 81.2] & 66.0 [57.8, 74.5] & 77.0 [69.3, 84.2] & 69.0 [60.0, 77.3] \\
Best-of-N (argmax) & 66.0 [58.0, 74.0] & 60.5 [53.0, 68.0] & 69.0 [59.5, 77.5] & 62.0 [54.0, 69.8] \\
Chunk pruning & 69.0 [57.8, 78.2] & 63.0 [53.5, 72.0] & 71.5 [62.0, 79.8] & 65.0 [56.2, 73.5] \\
\bottomrule
\end{tabular}
\end{center}
\end{table}

\textbf{Stimuli.} For each of the 40 conditions of each testbed and generation seed 2000, the sequence returned by \method{} (prescribed schedule, argmax, $N = 32$, $\alpha$ as in Section~\ref{sec:setup}) was paired with the sequence returned by the base model, by best-of-N argmax and by chunk pruning ($M = N/4$) at the same budget, giving 40 pairs per comparator, 120 pairs per testbed and 240 pairs in total. Dance sequences were rendered as SMPL \citep{loper2015smpl} mesh videos at 30 fps with a fixed camera and the conditioning music as the soundtrack; music clips were the 40-second decodes, loudness-normalized to $-14$ LUFS. Within a pair both sequences share the condition and the seed, so raters heard the same music for both dances or read the same description for both clips, and method identities were hidden throughout.

\textbf{Raters.} Participants were recruited on Prolific and were required to be fluent in English, to have an approval rate of at least 95 percent and at least 100 completed studies, and to pass a headphone screening check. Each participant judged 20 substantive pairs from one testbed in a session of about 40 minutes and received US\$12, a target rate of US\$18 per hour. Two instructed-response attention checks were interleaved with the substantive pairs, and participants failing both were excluded: 4 of 64, leaving 60 retained participants, 30 per testbed, with replacement sessions restoring five valid ratings per pair. The retained data comprise 1{,}200 pair evaluations and 2{,}400 answers across the two questions. Pair order and the side assigned to \method{} were randomized independently per participant.

\textbf{Questions.} For each pair raters answered, in order: (1) \emph{Which dance follows the beat of the music better?} for dance, or \emph{Which clip better matches the description shown?} for music; (2) \emph{Which would you rather watch?} for dance, or \emph{Which would you rather listen to?} for music. Each question offered A, B and no preference. Question (1) reads the dimension the steering reward targets; question (2) reads overall preference.

\textbf{Statistics.} For each testbed, comparator and question, the win rate is the percentage of judgements preferring \method{}, with no-preference counted as one half, so that each table entry summarizes 200 judgements over 40 pairs. Intervals are 95\% percentile bootstraps over 1{,}000 resamples of the 40 conditions with replacement, keeping all judgements of a sampled condition together and using the same resampled indices across comparators and questions within a testbed; they quantify uncertainty over conditions and are reported without adjustment for multiple comparisons. Agreement is Fleiss' $\kappa$ per testbed and question over its 120 pairs after mapping the randomized A/B answers to \method{}, comparator and no preference. To relate the automatic reward to human preference, Spearman's $\rho$ across the 40 conditions pairs each condition's reward-question win rate with the automatic reward difference between \method{} and the same comparator.

\textbf{Results.} Table~\ref{tab:human} shows a consistent preference for \method{} on both questions. Against the base model, best-of-N and chunk pruning respectively, the reward-question win rate exceeds 50 percent in 34, 28 and 26 of the 40 dance conditions and in 34, 29 and 28 of the 40 music conditions; for overall preference the counts are 28, 26 and 26 for dance and 28, 25 and 27 for music. No-preference answers make up 12.5 percent of dance judgements and 12.0 percent of music judgements. Fleiss' $\kappa$ is $0.221$ for dance beat alignment, $0.122$ for dance overall preference, $0.191$ for music prompt adherence and $0.178$ for music overall preference. Spearman's $\rho$ between reward difference and reward-question win rate is positive for all six comparisons: $0.469$, $0.624$ and $0.580$ on dance and $0.225$, $0.525$ and $0.667$ on music against the base model, best-of-N and chunk pruning.

\end{document}